\newtheorem{assumption}{Assumption}
\newtheorem{fact}{Fact}
\newtheorem{lem}{Lemma}
\newtheorem{defn}{Definition}
\newtheorem{cor}{Corollary}
\newcommand{\E}{\mathbb{E}}
\newcommand{\Prob}{\mathbb{P}}
\newcommand{\hist}{\mathcal{F}_{t}}
\newcommand{\A}{\mathcal{A}}
\newcommand{\Y}{\mathcal{Y}}
\begin{document}

\title{An Information-Theoretic Analysis of Thompson Sampling}

\author{\name Daniel Russo \email djrusso@stanford.edu \\
       \addr Department of Management Science and Engineering\\
       Stanford University\\
       Stanford, California 94305
       \AND
       \name Benjamin\ Van Roy \email bvr@stanford.edu \\
       \addr Departments of Management Science and Engineering and Electrical Engineering\\
       Stanford University\\
       Stanford, California 94305}

\editor{Peter Auer}

\maketitle

\begin{abstract}
We provide an information-theoretic analysis of Thompson sampling that applies across a broad range of online optimization problems in which 
a decision-maker must learn from partial feedback.  This analysis inherits the simplicity and elegance of information theory
and leads to regret bounds that scale with the entropy of the optimal-action distribution. This strengthens preexisting results and yields new insight into how information improves performance.

\end{abstract}

\begin{keywords}
Thompson sampling, online optimization, mutli--armed bandit, information theory, regret bounds
\end{keywords}

\section{Introduction}

This paper considers the problem of repeated decision making in the presence of model uncertainty. A decision-maker repeatedly chooses among a set of possible actions, observes an outcome, and receives a reward representing the utility derived from this outcome. The decision-maker is uncertain about the underlying system and is therefore initially unsure of which action is best. However, as outcomes are observed, she is able to learn over time to make increasingly effective decisions. Her objective is to choose actions sequentially so as to maximize the expected cumulative reward. 

We focus on settings with {\it partial feedback}, under which the decision-maker does not generally observe what the reward would have been had she selected a different action.
This feedback structure leads  to an inherent tradeoff between {\it exploration and exploitation}:  by experimenting with poorly understood actions one can learn to make more effective decisions in the future, but focusing on better understood actions may lead to higher rewards in the short term. The classical multi--armed bandit problem is an important special case of this formulation. In such problems, the decision--maker only observes rewards she receives, and rewards from one action provide no information about the reward that can be attained by selecting other actions. We are interested here primarily in models and algorithms that accommodate cases where the number of actions is very large and there is a richer {\it information structure} relating actions and observations. This category of problems is often referred to as {\it online optimization with partial feedback.} 

A large and growing literature treats the design and analysis of algorithms for such problems.  An online optimization algorithm typically starts with two forms
of prior knowledge.  The first -- {\it hard knowledge} -- posits that the mapping from action to outcome distribution lies within a particular family of mappings.  
The second -- {\it soft knowledge} -- concerns which of these mappings are more or less likely to match reality.  Soft knowledge evolves with observations
and is typically represented in terms of a probability distribution or a confidence set.

Much recent work concerning online optimization algorithms focuses on establishing performance guarantees in the form of {\it regret bounds}.
Surprisingly, virtually all of these regret bounds depend on hard knowledge but not soft knowledge, a
notable exception being the bounds of \citet{srinivas2012information} which we discuss further in Section \ref{sec:review}.
Regret bounds that depend on hard knowledge
yield insight into how an algorithm's performance scales with the complexity of the family of mappings and are useful for
delineating algorithms on that basis, but if a regret bound does not depend on soft knowledge, it does not have much to say about how future performance 
should improve as data is collected.  The latter sort of insight should be valuable for designing better ways of trading off between exploration
and exploitation, since it is soft knowledge that is refined as a decision-maker learns.  Another important 
benefit to understanding how performance depends on soft knowledge arises in practical applications: when designing an online learning
algorithm, one may have access to historical data and want to understand how this prior information benefits future performance.

In this paper, we establish regret bounds that depend on both hard and soft knowledge 
for a simple online learning algorithm alternately known as {\it Thompson sampling}, {\it posterior sampling}, or {\it probability matching}.  
The bounds strengthen results from prior work not only with respect to Thompson sampling but relative to regret bounds for {\it any} 
online optimization algorithm.  Further, the bounds
offer new insight into how regret depends on soft knowledge.   Indeed, forthcoming work of ours leverages this  
to produce an algorithm that outperforms Thompson sampling.

We found information theory to provide tools ideally suited for deriving our new regret bounds,
and our analysis inherits the simplicity and elegance enjoyed by work in that field.
Our formulation encompasses a broad family of information structures, including as special cases multi--armed bandit problems with independent arms, online optimization problems with full information, linear bandit problems, and problems with combinatorial action sets and ``semi--bandit'' feedback.
We leverage information theory to provide a unified analysis that applies to each of those special cases, establishing that
Thompson sampling satisfies order optimal Bayesian regret bounds for each one.\footnote{For two of these special cases, the bounds are only tight up to a logarithmic factor.} 

A novel feature of our bounds is their dependence on the entropy of the optimal-action distribution. To our knowledge, these are the first bounds on the expected 
regret of {\it any} algorithm that depend on the magnitude of the agent's uncertainty about which action is optimal.    The fact that our bounds only depend on 
uncertainties relevant to optimizing performance highlights the manner in which Thompson sampling naturally exploits complex 
information structures.  Further, in practical contexts, a decision-maker may begin with an understanding that some actions are more likely to
be optimal than others.  For example, when dealing with a shortest path problem, one might expect paths that traverse fewer edges to generally
incur less cost.  Our bounds are the first to formalize the performance benefits afforded by such an understanding.

\subsection{ Preview of Results }

Our analysis is based on a general probabilistic, or Bayesian, formulation in which uncertain quantities are modeled as random variables. In principle, the 
optimal strategy for such a problem could be calculated via dynamic programing, but for problem classes of practical interest this would be computationally intractable.
{\it Thompson sampling} serves as a simple and elegant heuristic strategy.
In this section, we provide a somewhat informal problem statement, and a preview of our main results about Thompson sampling. 
In the next subsection we discuss how these results relate to the existing literature. 

A decision maker repeatedly chooses among a finite set of possible actions $\A$ and upon taking action $a \in \A$ she observes a random outcome $Y_{t,a}\in \Y$. She associates a reward with each outcome as specified by a reward function $R:\Y \rightarrow \mathbb{R}$. The outcomes $\left( Y_{t,a}\right)_{t\in \mathbb{N}}$ are drawn independently over time from a fixed probability distribution $p^*_{a}$ over $\Y$. The decision maker is uncertain about the distribution of outcomes $p^*=(p^{*}_{a})_{a  \in \A}$, which is itself distributed according to a prior distribution over a family $\mathcal{P}$ of such distributions. However, she is able to learn about $p^*$ as the outcomes of past decisions are observed, and this allows her to learn over time to attain improved performance. 

We first present a special  case of our main result that applies to online linear optimization problems under bandit feedback. This result applies when each action is associated with a $d$--dimensional feature vector and the mean reward of each action is the inner product between an unknown parameter vector and the action's known feature vector. More precisely, suppose $\A \subset \mathbb{R}^d$ and that for every $p\in \mathcal{P}$ there is a vector $\theta_{p}\in \mathbb{R}^d$ such that 
\begin{equation}\label{eq: linear model}
\underset{y\sim p_a}{\E}\left[ R(y)  \right] = a^T \theta_p
\end{equation}
for all $a \in \A$. When an action is sampled, a random reward in $[0,1]$ is received, where the mean reward is given by \eqref{eq: linear model}. Then, our analysis establishes that the expected cumulative regret of Thompson sampling up to time $T$ is bounded by 
\begin{equation}\label{eq: linear bound}
\sqrt{\frac{{\rm Entropy}(A^*) dT  }{2}},
\end{equation}
where $A^* \in \underset{a\in \A}{\arg\max} \, \E \left[ R(Y_{t,a}) | p^* \right]$ denotes the optimal action. 
This bound depends on the time horizon, the entropy 
of the of the prior distribution of the optimal action $A^*$, and the dimension $d$ of the linear model. 

Because the entropy of $A^*$ is always less than $\log |\A|$, \eqref{eq: linear bound}   yields a bound of order $\sqrt{ \log(|\A|)dT }$, and this scaling cannot be improved in general (see Section \ref{subsec: linear bandit}). The bound $\eqref{eq: linear bound}$ is stronger than this worst--case bound, since the entropy of $A^*$ can be much smaller than $\log(|\A|)$.

Thompson sampling incorporates prior knowledge in a flexible and coherent way, and the benefits of this are reflected in two distinct ways by the above bound. First, as in past work \citep[see e.g.][]{dani2008stochastic, rusmevichientong2010linearly}, the bound depends on the dimension of the linear model instead of the number of actions. This reflects that the algorithm is able to learn more rapidly by exploiting the known model, since observations from selecting one action provide information about rewards that would have been generated by other actions. Second, the bound depends on the entropy of the prior distribution of $A^*$ instead of a worst case measure like the logarithm of the number of actions. This highlights the benefit of prior knowledge that some actions are more likely to be optimal than others. In particular, this bound exhibits the natural property that as the entropy of the prior distribution of $A^*$ goes to zero, expected regret does as well.  

Our main result extends beyond the class of linear bandit problems. Instead of depending on the linear dimension of the model, it depends on a more general measure of the problem's information complexity: what we call the problem's {\it information ratio}. By bounding the information ratio in specific settings, we recover the bound \eqref{eq: linear bound} as a special case, along with bounds for problems with full feedback and problems with combinatorial action sets and ``semi--bandit'' feedback. 

\subsection{Related Work}
\label{sec:review}


Though Thompson sampling was first proposed in 1933 \citep{thompson1933}, until recently it was largely ignored in the academic literature. Interest in the algorithm grew after empirical studies \citep{scott2010modern, chapelle2011empirical} demonstrated performance exceeding state of the art. Over the past several years, it has also been  adopted in industry.\footnote{Microsoft \citep{graepel2010web}, Google analytics \citep{googleanalytics} and Linkedin \citep{tang2013automatic} have used Thompson sampling.}  This has prompted a surge of interest in providing theoretical guarantees for Thompson sampling. 

One of the first theoretical guarantees for Thompson sampling was provided by \citet{may2012optimistic}, but they showed only that the algorithm converges asymptotically to optimality. \citet{agrawal2012analysis, kaufmann2012thompson,  agrawal2013further} 
and \citet{Korda2013Thompson} studied on the classical multi-armed bandit problem, where sampling one action provides no information about other actions. They provided frequentist regret bounds for Thompson sampling that are asymptotically optimal in the sense defined by \citet{lai1985asymptotically}. To attain these bounds, the authors fixed a specific uninformative prior distribution, and studied the algorithm's performance assuming this prior is used. 

Our interest in Thompson sampling is motivated by its ability to incorporate rich forms of prior knowledge about the actions and the relationship among them. Accordingly, we study the algorithm in a very general framework, allowing for an {\it arbitrary} prior distribution over the true outcome distributions $p^* = (p^{*}_{a})_{a \in \A}$. To accommodate this level of generality while still focusing on finite--time performance, we study the algorithm's {\it expected regret} under the prior distribution. This measure is sometimes called {\it Bayes risk} or {\it Bayesian regret}. 

Our recent work \citep{russo2013} provided the first results in this setting. That work leverages a close connection between Thompson sampling and upper confidence bound (UCB) algorithms, as well as existing analyses of several UCB algorithms. This confidence bound analysis was then extended to a more general setting, leading to a general regret bound stated in terms of a new notion of model complexity -- what we call the eluder dimension. While the connection with UCB algorithms may be of independent interest, it's desirable to have a simple, self--contained, analysis that does not rely on the--often complicated--construction of confidence bounds.

\citet{agrawal2013further} provided the first ``distribution independent'' bound for Thompson sampling. They showed that when Thompson sampling is executed with an independent uniform prior and rewards are binary the algorithm satisfies a frequentist regret bound\footnote{They bound regret conditional on the true reward distribution: $\E \left[ {\rm Regret}(T, \pi^{\rm TS}) | p^* \right]$.} of order $\sqrt{|\A|T \log(T)}$.
\citet{russo2013} showed that, for an arbitrary prior over bounded reward distributions, the expected regret of Thompson sampling under this prior is bounded by a term of order $\sqrt{|\A|T \log(T)}$. \citet{bubeck2013prior} showed that this second bound can be improved to one of order $\sqrt{|\A|T}$ using more sophisticated confidence bound analysis, and also studied a problem setting where the regret of Thompson sampling is bounded uniformly over time. In this paper, we are interested mostly in results that replace the explicit $\sqrt{|\A|}$ dependence on the number of actions with a more general measure of the problem's information complexity. For example, as discussed in the last section, for the problem of linear optimization under bandit feedback one can provide bounds that depend on the dimension of the linear model instead of the number of actions. 

To our knowledge, \citet{agrawal2013thompson} are the only other authors to have studied the application of Thompson sampling to linear bandit problems. They showed that, when the algorithm is applied with an uncorrelated Gaussian prior over $\theta_{p^*}$, it satisfies frequentist regret bounds of order $\frac{d^2}{\epsilon}\sqrt{T^{1+\epsilon}}(\log(Td)$. Here $\epsilon$ is a parameter used by the algorithm to control how quickly the posterior concentrates. \citet{russo2013} allowed for an arbitrary prior distribution, and provided a bound on expected regret (under this prior) of order $d \log(T)\sqrt{T}.$ Unlike the bound \eqref{eq: linear bound}, these results hold whenever $\A$ is a compact subset of $\mathbb{R}^{d}$, but we show in Appendix \ref{sec: infinite action spaces} that through discretization argument the bound  \eqref{eq: linear bound} also yields a similar bound whenever  $\A$ is compact. In the worst case, that bound is of order $d\sqrt{T \log(T)}$. 
%

Other very recent work \citep{gopalan2014thompson} provided a general asymptotic guarantee for Thompson sampling. They studied the growth rate of the cumulative number of times a suboptimal action is chosen as the time horizon $T$ tends to infinity. One of their asymptotic results applies to the problem of online linear optimization under ``semi--bandit'' feedback, which we study in Section \ref{subsec: semi-bandit}.

An important aspect of our regret bound is its dependence on soft knowledge through the entropy of the optimal-action distribution. One of the only other regret bounds that depends on soft--knowledge was provided very recently by \citet{li2013generalized}. Inspired by a connection between Thompson sampling and exponential weighting schemes, that paper introduced a family of Thompson sampling like algorithms and studied their application to contextual bandit problems. While our analysis does not currently treat contextual bandit problems, we improve upon their regret bound in several other respects. First, their bound depends on the entropy of the prior distribution of mean rewards, which is never smaller, and can be much larger, than the entropy of the distribution of the optimal action. In addition, their bound has an order $T^{2/3}$ dependence on the problem's time horizon, and, in order to guarantee each action is explored sufficiently often, requires that actions are frequently selected uniformly at random. In contrast, our focus is on settings where the number of actions is large and the goal is to learning without sampling each one. 

Another regret bound that to some extent captures dependence on soft knowledge is that of \citet{srinivas2012information}.
This excellent work focuses on extending algorithms and expanding theory to address multi-armed bandit problems with arbitrary reward functions 
and possibly an infinite number of actions.  In a sense, there is no hard knowledge while soft knowledge is represented in terms of a Gaussian process over a possibly infinite 
dimensional family of functions.  An upper-confidence-bound algorithm is proposed and analyzed. Our earlier work \citep{russo2013} showed similar bounds also apply to Thompson sampling.  Though our results here do not treat infinite action spaces,
it should be possible to extend the analysis in that direction.  One substantive difference is that our results apply to a much broader class of models: 
distributions are not restricted to Gaussian and more complex information structures are allowed.  Further, the results of \citet{srinivas2012information}
do not capture the benefits of soft knowledge to the extent that ours do.  For example, their regret bounds do not depend on the mean of the reward function 
distribution, even though mean rewards heavily influence the chances that each action is optimal.  Our regret bounds, in contrast, establish that regret
vanishes as the probability that a particular action is optimal grows. 

Our review has discussed the recent literature on Thompson sampling as well as two papers that have established regret bounds that depend on soft knowledge.  There is of course an immense body of work on alternative approaches to efficient exploration, including work on the Gittins index approach \citep{gittins2011multi}, Knowledge Gradient strategies \citep{ryzhov2012knowledge}, and upper-confidence bound strategies \citep{lai1985asymptotically,  auer2002finite}. In an adversarial framework, authors often study exponential-weighting shemes or, more generally, strategies based on online stochastic mirror descent. \citet{bubeck2012regret} provided a general review of upper--confidence strategies and algorithms for the adversarial multi-armed bandit problem. 

\section{Problem Formulation}
The decision--maker sequentially chooses actions $(A_t)_{t\in \mathbb{N}}$ from the action set $\A$ and observes the corresponding outcomes $\left(Y_{t, A_t}\right)_{t\in \mathbb{N}}$. To avoid measure-theoretic subtleties, we assume the space of possible outcomes $\Y$ is a subset of a finite dimensional Euclidean space.  There is a random outcome $Y_{t, a} \in \Y$ associated with each $a\in \A$ and time $t \in \mathbb{N}$.  Let $Y_t \equiv (Y_{t,a})_{a \in \A}$ be the vector of outcomes at time $t \in \mathbb{N}$.  The ``true outcome distribution'' $p^*$ is a distribution over $\Y^{|\A|}$ that is itself randomly drawn from the family of distributions $\mathcal{P}$.
We assume that, conditioned on $p^*$, $(Y_t)_{t \in \mathbb{N}}$ is an iid sequence with each element $Y_t$ distributed according to $p^*$.  Let $p_a^*$ be the marginal distribution corresponding to $Y_{t,a}$.

%

The agent associates a reward $R(y)$ with each outcome $y\in \Y$, where the reward function $R: \Y \rightarrow \mathbb{R}$ is fixed and known.  Uncertainty about $p^*$ induces uncertainty about the true optimal action, which we denote by $A^* \in \underset{a \in \A}{\arg \max}\,\E\left[ R(Y_{t,a}) | p^*\right] $. The $T$ period {\it regret} of the sequence of actions $A_1, .., A_T$ is the random variable, 
\begin{equation}\label{eq: regret}
{\rm Regret}(T) := \sum_{t=1}^{T} \left[ R(Y_{t,A^*}) - R(Y_{t, A_t}) \right],
\end{equation}
which measures the cumulative difference between the reward earned by an algorithm that always chooses the optimal action, and actual accumulated reward up to time $T$. In this paper we study expected regret
\begin{equation}\label{eq: expected regret}
\E \left[ {\rm Regret}(T)\right] = \E \left[\sum_{t=1}^{T}  \left[ R(Y_{t, A^*}) - R(Y_{t, A_t}) \right] \right], 
\end{equation}
where 
the expectation is taken over the randomness in the actions $A_t$ and the outcomes $Y_t$, and over the prior distribution over $p^*$. This measure of performance is commonly called {\it Bayesian regret} or {\it Bayes risk}.  

\paragraph{Filtrations and randomized policies:}

We define all random variables with respect to a probability space $(\Omega, \mathcal{F}, \Prob)$. Actions are chosen based on the history of past observations and possibly some external source of randomness. To represent this external source of randomness more formally, we introduce a sequence of random variables $\left(U_t\right)_{t \in \mathbb{N}}$, where for each $i\in \mathbb{N}$, $U_i$ is jointly independent of $\{U_{t}\}_{t\neq i}$, the outcomes $\left\{Y_{t,a}\right\}_{t \in \mathbb{N}, a \in \A}$, and $p^*$. We fix the filtration $\left( \mathcal{F}_{t}\right)_{t\in \mathbb{N}}$  where $\hist \subset \mathcal{F}$  is the sigma--algebra generated by $\left( A_{1}, Y_{1,A_1},..., A_{t-1}, Y_{t-1, A_{t-1}})  \right)$. 
 The action $A_t$ is  measurable with respect to the sigma--algebra generated by $\left(\hist, U_t \right)$. That is, given the history of past observations, $A_t$ is random only through its dependence on $U_{t}$. 

The objective is to choose actions in a manner that minimizes expected regret. For this purpose,  it's useful to think of the actions as being chosen by a {\it randomized policy} $\pi$, which is an $\mathcal{F}_{t}$--adapted sequence $\left(\pi_t\right)_{t\in \mathbb{N}}$. An action is chosen at time $t$ by randomizing according to $\pi_t(\cdot)=\Prob(A_t \in \cdot \vert \hist)$, which specifies a probability distribution over $\A$. We explicitly display the dependence of regret on the policy $\pi$, letting $\E\left[{\rm Regret}(T, \pi)\right]$ denote the expected value given by \eqref{eq: expected regret} when the actions $(A_{1},..,A_{T})$ are chosen according to $\pi$. 

\paragraph{Further Assumptions:}
To simplify the exposition, our main results will be stated under two further assumptions. The first requires that rewards are uniformly bounded, effectively controlling the worst-case variance of the reward distribution. In particular, this assumption is used only in proving Fact \ref{fact: DME to DKL}. In the technical appendix, we show that Fact \ref{fact: DME to DKL} can be extended to the case where reward distributions are sub-Gaussian, which yields results in that more general setting. 
\begin{assumption}\label{assum: bounded rewards}
$\underset{\overline{y}\in \Y}{\sup} R(\overline{y})-\underset{\underline{y}\in \Y}{\inf}R(\underline{y}) \leq 1.$
\end{assumption}
Our next assumption requires that the action set is finite.  In the technical appendix we  show that some cases where $\A$ is infinite can be addressed through  discretization arguments.  
\begin{assumption}
$\A$ is finite. 
\end{assumption}
Because the Thompson sampling algorithm only chooses actions from the support of $A^*$, all of our results hold when the finite set $\A$ denotes only the actions in the support of $A^*$. This difference can be meaningful. For example, when $\A$ is a polytope and the objective function is linear, the support of $A^*$ contains only extreme points of the polytope: a finite set.

\section{Basic Measures and Relations in Information Theory}
 Before proceeding, we will define several common information measures -- entropy, mutual information, and Kullback-Leibler divergence --- and state several facts about these measures that will be referenced in our analysis. When all random variables are discrete, each of these facts is shown in chapter 2 of \citet{cover2012elements}. A treatment that applies to general random variables is provided in chapter 5 of \citet{gray2011entropy}. 

Before proceeding, we will define some simple shorthand notation. Let $P(X) = \Prob( X \in \cdot)$ denote the distribution function of random variable $X$. Similarly, define $P(X|Y) = \Prob( X \in \cdot |Y)$ and $P(X|Y=y) = \Prob( X \in \cdot |Y=y)$.

Throughout this section, we will fix random variables $X$, $Y$, and $Z$ that are defined on a joint probability space. We will allow $Y$ and $Z$ to be general random variables, but will restrict $X$ to be supported on a finite set $\mathcal{X}$. This is sufficient for our purposes, as we will typically apply these relations when $X$ is $A^*$, and is useful, in part, because the entropy of a general random variable can be infinite.

The {\it Shannon entropy} of $X$ is defined as 
\begin{equation*}
H(X) = - \sum_{x\in\mathcal{X}} \Prob(X=x) \log \Prob(X=x) .
\end{equation*}
The first fact establishes uniform bounds on the entropy of a probability distribution.
\begin{fact}\label{fact: entropy bounds}
$0 \leq H(X) \leq \log(|\mathcal{X}|)$. 
\end{fact}
If $Y$ is a discrete random variable, the entropy of $X$ conditional on $Y=y$ is 
\[ 
H(X|Y=y ) = \sum_{x \in \mathcal{X}} \Prob\left(X=x | Y=y \right)\log \Prob(X=x| Y=y)  
\]
and the conditional entropy is of $X$ given $Y$ is
\[ 
H(X|Y)= \sum_{y} H(X|Y=y)\mathbb{P}(Y=y).
\] 
For a general random variable $Y$, the conditional entropy of $X$ given $Y$ is, 
\[
H(X|Y ) = \E_{Y}\left[- \sum_{x \in \mathcal{X}} \Prob\left(X=x | Y\right)\log \Prob(X=x| Y) \right],
\]
where this expectation is taken over the marginal distribution of $Y$. For two probability measures $P$ and $Q$, if $P$ is absolutely continuous with respect to $Q$, the {\it Kullback--Leibler divergence} between them is
\begin{equation}
D(P || Q)= \intop\log \left( \frac{dP}{dQ} \right)dP 
\end{equation} 
where $\frac{dP}{dQ}$ is the Radon--Nikodym derivative of $P$ with respect to $Q$. This is the expected value under $P$ of the log-likelihood ratio between $P$ and $Q$, and is a measure of how different $P$ and $Q$ are. 
The next fact establishes the non-negativity of Kullback--Leibler divergence.
\begin{fact}\label{fact: nonnegative} (Gibbs' inequality) 
For any probability distributions $P$ and $Q$ such that $P$ is absolutely continuous with respect to $Q$, $D\left( P || Q\right) \geq 0$ with equality if and only if $P=Q$ $P$--almost everywhere.
\end{fact}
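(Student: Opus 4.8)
The plan is to derive Gibbs' inequality as a direct consequence of Jensen's inequality applied to a suitable convex function, exploiting the hypothesis $P \ll Q$ to express everything as an integral against $Q$. Since $P$ is absolutely continuous with respect to $Q$, the Radon--Nikodym derivative $f := dP/dQ$ exists, is nonnegative, and satisfies $\int f \, dQ = \int dP = 1$. Rewriting the divergence as an integral against $Q$, I would obtain
\begin{equation*}
D(P \| Q) = \int \log\!\left( \frac{dP}{dQ} \right) dP = \int f \log f \, dQ = \E_Q\!\left[ g(f) \right],
\end{equation*}
where $g(x) = x \log x$ (with the convention $g(0) = 0$) is convex on $[0,\infty)$.

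The first step is to justify this change of variables, noting that $dP = f \, dQ$ so $\log f \, dP = f \log f \, dQ$, and that $f > 0$ holds $P$-almost everywhere, so the integrand is well defined $P$-a.e. The second step is to invoke Jensen's inequality for the convex function $g$ against the probability measure $Q$: because $f$ is a nonnegative $Q$-integrable function with $\E_Q[f] = 1$, we get
\begin{equation*}
D(P\|Q) = \E_Q[g(f)] \;\geq\; g\!\left( \E_Q[f] \right) = g(1) = 0,
\end{equation*}
which is the desired bound. An equivalent and self-contained route avoids naming Jensen explicitly by applying the elementary inequality $\log t \le t - 1$ to $t = 1/f$ under $P$; this yields $-D(P\|Q) \le Q(\{f > 0\}) - 1 \le 0$, and has the advantage of making the role of absolute continuity fully transparent, since the deficit $1 - Q(\{f>0\})$ is exactly the $Q$-mass that $P$ fails to charge.

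For the equality claim, I would use that $g(x) = x\log x$ is \emph{strictly} convex on $(0,\infty)$, so equality in Jensen's inequality forces $f$ to equal a constant $Q$-almost everywhere; since $\E_Q[f] = 1$, that constant must be $1$, giving $dP/dQ = 1$ $Q$-a.e. and hence $P = Q$ (in particular they agree $P$-almost everywhere), while the converse direction is immediate. I expect the main obstacle to be purely technical rather than conceptual: because the fact is stated for general, not necessarily discrete, random variables, care is needed to handle the set $\{f = 0\}$ correctly (it carries no $P$-mass but possibly positive $Q$-mass), and to verify the measurability and integrability needed to legitimize both the change of measure and the application of Jensen's inequality in the abstract measure-theoretic setting.
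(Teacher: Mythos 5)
Your proof is correct. Note, however, that the paper does not prove this fact at all: it is stated as a standard property, with the discrete case attributed to chapter 2 of Cover and Thomas and the general case to chapter 5 of Gray, and the paper's appendix only supplies proofs for Facts 8, 9, and 5. Your argument---rewriting $D(P\|Q)=\E_Q[f\log f]$ with $f=dP/dQ$ and applying Jensen's inequality to the strictly convex function $x\log x$ (or, equivalently, the elementary bound $\log t\le t-1$), with strict convexity yielding the equality case---is precisely the standard textbook proof found in those references, and your handling of the set $\{f=0\}$ and of the change of measure is sound, so it fills the gap the paper delegates to the literature.
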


 The {\it mutual information} between $X$ and $Y$
\begin{equation}\label{eq: first mutual information definition}
I(X ; Y) = D\left( P( X, Y)   \, || \, P\left( X    \right) P\left( Y   \right)  \right) 
\end{equation}
is the Kullback--Leibler divergence between the joint distribution of $X$ and $Y$ and the product of the marginal distributions.  From the definition, it's clear that $I(X; Y)=I(X; A)$, and Gibbs' inequality implies that $I(X; Y)\geq 0$ and $I(X; Y)=0$ when $X$ and $Y$ are independent. 

The next fact, which is Lemma 5.5.6 of \citet{gray2011entropy},  states that the mutual information between $X$ and $Y$ is the expected reduction in the entropy of the posterior distribution of $X$ due to observing $Y$. 
\begin{fact}\label{fact: mutual information to entropy} 
(Entropy reduction form of mutual information) 
\[I\left( X ; Y \right)  = H(X)-H(X|Y)\] 
\end{fact}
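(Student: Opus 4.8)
The plan is to start from the definitional form of mutual information as a Kullback--Leibler divergence, $I(X;Y) = D\left( P(X,Y) \, || \, P(X)P(Y) \right)$, and reduce it to the claimed difference of entropies by identifying the relevant Radon--Nikodym derivative and splitting the resulting logarithm into two pieces. The crucial observation is that, because $X$ is supported on the finite set $\mathcal{X}$, the Radon--Nikodym derivative of the joint law with respect to the product of the marginals admits the explicit pointwise form
\[
\frac{dP(X,Y)}{d\left(P(X)P(Y)\right)}(x,y) = \frac{\Prob(X=x | Y=y)}{\Prob(X=x)},
\]
which is just Bayes' rule at the level of densities. I would first verify that this derivative is well defined: since $\Prob(Y \in \cdot) = \sum_{x} \Prob(X=x)\,\Prob(Y \in \cdot \,|\, X=x)$, every conditional law $\Prob(Y \in \cdot \,|\, X=x)$ with $\Prob(X=x)>0$ is dominated by the marginal $P(Y)$, so the joint is absolutely continuous with respect to $P(X)P(Y)$ and the divergence is finite and unambiguous.

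Given this, I would substitute the derivative into the integral defining $D$ and split the logarithm, writing
\[
I(X;Y) = \int \log \Prob(X=x | Y=y)\, dP(X,Y) \; - \; \int \log \Prob(X=x)\, dP(X,Y).
\]
The second integral involves only $x$, so integrating out $y$ collapses it to $\sum_{x} \Prob(X=x)\log\Prob(X=x) = -H(X)$. For the first integral, I would integrate over $y$ on the inside and recognize the integrand, for each fixed $y$, as $\sum_{x}\Prob(X=x| Y=y)\log\Prob(X=x| Y=y)$; taking the expectation over the marginal of $Y$ reproduces exactly the definition of $H(X|Y)$ given above for a general random variable $Y$, so this term equals $-H(X|Y)$. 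Combining the two pieces yields $I(X;Y) = H(X) - H(X|Y)$.

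The main obstacle, and the only place where any care beyond bookkeeping is needed, is the measure-theoretic handling of a general (possibly continuous) $Y$: I must work with a regular conditional distribution $y \mapsto \Prob(X=x | Y=y)$ rather than with elementary conditional probabilities, and justify interchanging the sum over the finite set $\mathcal{X}$ with the integral over $Y$. Both are routine once $X$ is discrete---the sum is finite, so Fubini applies without integrability worries beyond the finiteness of the divergence already established---so the argument goes through verbatim in the general setting. Alternatively, one may simply invoke Lemma 5.5.6 of \citet{gray2011entropy}, of which this fact is a restatement.
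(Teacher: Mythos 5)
Your proof is correct, and it necessarily differs from the paper's treatment because the paper does not prove this fact at all: it states it as a restatement of Lemma 5.5.6 of \citet{gray2011entropy}, so your closing fallback is precisely the paper's entire argument. Your self-contained derivation --- identifying the Radon--Nikodym derivative of $P(X,Y)$ with respect to $P(X)P(Y)$ as $\Prob(X=x\,|\,Y=y)/\Prob(X=x)$, splitting the logarithm, and collapsing the two terms to $H(X)$ and $-H(X|Y)$ via the tower property --- is sound, and it uses the finiteness of $\mathcal{X}$ exactly where you indicate: existence of a regular conditional distribution for $X$ given $Y$, the trivial interchange of the finite sum with the integral, and boundedness of both entropy terms by $\log|\mathcal{X}|$. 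One small overstatement: absolute continuity of the joint with respect to the product measure (the paper records the closely related domination of $P(Y|X=x)$ by $P(Y)$ in the fact stated just before Section 3.1) guarantees only that the divergence is well defined in $[0,\infty]$, not that it is finite; what actually legitimizes splitting the integral is that the $\log\Prob(X=x)$ piece integrates to the finite value $-H(X)$, and finiteness of $I(X;Y)$ then follows a posteriori from $0 \leq H(X|Y) \leq \log|\mathcal{X}|$. This is a presentational slip, not a gap. Finally, a route closer in spirit to the measure-theoretic work the paper does carry out itself would mirror its appendix proof of Fact \ref{fact: mutual information to KL}: establish the identity for quantized $Y$ and pass to the limit over refining finite partitions of $\Y$; your direct argument avoids that limiting step and is arguably cleaner given that $X$ has finite support.
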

The mutual information between $X$ and $Y$, conditional on a third random variable $Z$ is
\[ 
I( X; Y | Z) =  H(X|Z) -H(X| Y, Z),
\]
the expected additional reduction in entropy due to observing $Y$ given that $Z$ is also observed. This definition is also a natural generalization of the one given in \eqref{eq: first mutual information definition}, since 
\[
I( X; Y | Z) = \E_{Z}\left[ D\left( P\left((X,Y ) | Z  \right) \, ||\, P\left( X | Z  \right)P\left(X | Z  \right)  \right)\right].
\]
The next fact shows that conditioning on a random variable $Z$ that is independent of $X$ and $Y$ does not affect mutual information. 
\begin{fact}\label{fact: conditional MI under independence} If $Z$ is jointly independent of $X$ and $Y$, then
$I(X; Y | Z)  = I(X; Y)$. 
\end{fact}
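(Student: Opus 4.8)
The plan is to reduce everything to the entropy-reduction form of mutual information (Fact \ref{fact: mutual information to entropy}) together with the definition of conditional mutual information, and then verify two entropy identities using the independence hypothesis. By definition, $I(X; Y \mid Z) = H(X|Z) - H(X|Y,Z)$, while Fact \ref{fact: mutual information to entropy} gives $I(X;Y) = H(X) - H(X|Y)$. Thus it suffices to establish the two identities $H(X|Z) = H(X)$ and $H(X|Y,Z) = H(X|Y)$; subtracting the second from the first yields the claim.

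For the first identity, since $Z$ is independent of $X$, the posterior distribution of $X$ given $Z$ coincides with its prior: $\Prob(X = x \mid Z) = \Prob(X=x)$ almost surely for each $x \in \mathcal{X}$. Substituting this into the general definition of conditional entropy, the inner sum $-\sum_{x} \Prob(X=x \mid Z)\log \Prob(X=x \mid Z)$ equals $H(X)$ almost surely, so taking the expectation over the marginal of $Z$ gives $H(X|Z) = H(X)$.

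The second identity is the crux of the argument, and I would prove it by showing that joint independence of $Z$ from the pair $(X,Y)$ forces $\Prob(X=x \mid Y, Z) = \Prob(X=x \mid Y)$ almost surely. For discrete $Y$ and $Z$ this is an immediate application of Bayes' rule: writing $\Prob(X=x \mid Y=y, Z=z) = \Prob(X=x, Y=y, Z=z)/\Prob(Y=y, Z=z)$ and using the factorizations $\Prob(X=x, Y=y, Z=z) = \Prob(X=x, Y=y)\Prob(Z=z)$ and $\Prob(Y=y, Z=z) = \Prob(Y=y)\Prob(Z=z)$ that follow from joint independence, the factors of $\Prob(Z=z)$ cancel to leave $\Prob(X=x \mid Y=y)$. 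Consequently the inner entropy term matches the one appearing in $H(X|Y)$, and averaging over the joint law of $(Y,Z)$ collapses to averaging over $Y$ alone, giving $H(X|Y,Z) = H(X|Y)$.

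The main obstacle is purely measure-theoretic: the fact permits $Y$ and $Z$ to be general, possibly continuous, random variables, so the elementary Bayes-rule manipulation above must be recast as a statement about regular conditional distributions. I would phrase the argument so that joint independence yields a version of the conditional distribution $P(X \mid Y, Z)$ that is $\sigma(Y)$-measurable and agrees with $P(X \mid Y)$ almost surely, after which the general definitions of conditional entropy stated in this section apply verbatim. Because $X$ is supported on the finite set $\mathcal{X}$, all the relevant sums are finite and no integrability difficulties arise beyond those already absorbed into those definitions.
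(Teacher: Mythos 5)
The paper never actually proves this fact: it is one of the standard identities the authors import wholesale at the start of Section 3, citing chapter 2 of Cover and Thomas for discrete random variables and chapter 5 of Gray for the general case. So there is no in-paper argument to compare yours against, and the question is simply whether your argument is sound. It essentially is, and your route is the natural one: you reduce the claim, via the paper's own definitions $I(X;Y|Z)=H(X|Z)-H(X|Y,Z)$ and $I(X;Y)=H(X)-H(X|Y)$, to the two identities $H(X|Z)=H(X)$ and $H(X|Y,Z)=H(X|Y)$, and both do follow from the independence hypothesis. The first identity and your discrete Bayes-rule computation for the second are complete as written. The one step you signpost but do not execute is exactly what makes the fact nontrivial at the stated level of generality: for general $Y$ and $Z$, you must show that $Z$ being jointly independent of $(X,Y)$ forces $\Prob(X=x\mid Y,Z)=\Prob(X=x\mid Y)$ almost surely. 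This is standard but deserves a line. The $\sigma(Y)$-measurable candidate $\Prob(X=x\mid Y)$ qualifies as a version of $\Prob(X=x\mid Y,Z)$ provided it integrates correctly against $\sigma(Y,Z)$-measurable test functions; for bounded measurable $g,h$,
\begin{align*}
\E\left[\mathbf{1}\{X=x\}\, g(Y)\, h(Z)\right] &= \E\left[\mathbf{1}\{X=x\}\, g(Y)\right]\E\left[h(Z)\right]\\
&= \E\left[\Prob(X=x\mid Y)\, g(Y)\right]\E\left[h(Z)\right] = \E\left[\Prob(X=x\mid Y)\, g(Y)\, h(Z)\right],
\end{align*}
where the first equality uses the joint independence of $Z$ and $(X,Y)$, the second the tower property, and the third the (implied) independence of $Z$ and $Y$; a monotone-class argument then extends this from products $g(Y)h(Z)$ to all bounded $\sigma(Y,Z)$-measurable functions. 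With that in hand, your substitution into the paper's general definitions of conditional entropy goes through verbatim, the finiteness of $\mathcal{X}$ keeping all sums harmless, and the proof is complete.
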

The mutual information between a random variable $X$ and a collection of random variables $(Z_1,...,Z_T)$ can be expressed elegantly using the following ``chain rule.''  
\begin{fact}\label{fact: chain rule}(Chain Rule of Mutual Information)
\[
I(X; (Z_1,...Z_T)) = I\left( X; Z_1 \right)+I\left( X; Z_2 | \, Z_{1}  \right)+...+I\left( X; Z_T | \, Z_{1},...,Z_{T}  \right).
\]
\end{fact}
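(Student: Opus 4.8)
The plan is to reduce the chain rule to a telescoping sum of conditional entropies, leveraging the entropy-reduction form of mutual information from Fact \ref{fact: mutual information to entropy} together with the definition of conditional mutual information stated just above the claim. The key observation is that both of these already express (conditional) mutual information as a difference of (conditional) entropies, so once everything is rewritten in that form the identity collapses to a purely algebraic cancellation, with no analytic content left to handle.

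First I would rewrite each summand on the right-hand side in entropy-reduction form. For $k \geq 2$, the definition of conditional mutual information, taken with conditioning variable $(Z_1, \ldots, Z_{k-1})$ and ``observation'' $Z_k$, gives $I(X; Z_k \mid Z_1, \ldots, Z_{k-1}) = H(X \mid Z_1, \ldots, Z_{k-1}) - H(X \mid Z_1, \ldots, Z_k)$, while for the base case $k=1$ Fact \ref{fact: mutual information to entropy} gives $I(X; Z_1) = H(X) - H(X \mid Z_1)$. I would then sum these $T$ identities over $k = 1, \ldots, T$. Each conditional entropy $H(X \mid Z_1, \ldots, Z_k)$ with $1 \leq k \leq T-1$ appears once with a positive sign (as the first term of the $(k+1)$-st summand) and once with a negative sign (as the second term of the $k$-th summand), so all intermediate terms cancel and only $H(X) - H(X \mid Z_1, \ldots, Z_T)$ survives. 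Applying Fact \ref{fact: mutual information to entropy} one final time, now with the single observation taken to be the entire vector $Y = (Z_1, \ldots, Z_T)$, identifies this surviving expression as $I(X; (Z_1, \ldots, Z_T))$, which is exactly the left-hand side.

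Because the paper's definitions already package (conditional) mutual information as entropy differences, there is essentially no genuine obstacle here; the proof is a clean telescoping argument rather than a substantive computation. The only points requiring mild care are treating $k=1$ separately as an unconditioned base case, and noting that the stated definitions of conditional entropy and conditional mutual information remain valid for the general (possibly non-discrete) random variables $Z_k$ so long as $X$ is finitely supported, so that the telescoping goes through verbatim in that generality without any extra measure-theoretic bookkeeping.
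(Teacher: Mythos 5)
Your proof is correct, but it is worth noting that the paper never proves this fact at all: the chain rule is stated as a citation-backed fact, with the discrete case deferred to chapter 2 of \citet{cover2012elements} and the general case to chapter 5 of \citet{gray2011entropy}. Your telescoping argument is the standard derivation that those references use, and it is fully consistent with the paper's own framework: the paper \emph{defines} $I(X;Y|Z)$ as $H(X|Z) - H(X|Y,Z)$, so each conditional term on the right-hand side is an entropy difference by definition, and Fact \ref{fact: mutual information to entropy} (Lemma 5.5.6 of Gray, which holds for general $Y$ when $X$ is finitely supported) supplies both the base case $I(X;Z_1) = H(X) - H(X|Z_1)$ and the final identification of $H(X) - H(X|Z_1,\ldots,Z_T)$ with $I(X;(Z_1,\ldots,Z_T))$, whose primitive definition in the paper is the KL form \eqref{eq: first mutual information definition}. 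The cancellation is legitimate because every conditional entropy involved lies in $[0, \log|\mathcal{X}|]$, so no $\infty - \infty$ issues arise; your remark that finite support of $X$ is what makes the general-$Z_k$ case go through is exactly the right caveat. Two minor observations: first, the statement as printed in the paper contains a typo (the last summand conditions on $Z_1,\ldots,Z_T$ rather than $Z_1,\ldots,Z_{T-1}$), which your proof implicitly and correctly repairs; second, your argument proves the chain rule for the paper's entropy-difference definition of conditional mutual information, which is all the paper's analysis (in particular the proof of Proposition \ref{prop: regret bound}) ever uses, so nothing further is needed to justify the equivalence with the conditional-KL form mentioned after that definition.
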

We provide some details related to the derivation of Fact \ref{fact: mutual information to KL} in the appendix. 
\begin{fact}\label{fact: mutual information to KL} (KL divergence form of mutual information) 
\begin{eqnarray*}
I\left( X ; Y \right)  &=& \E_{X}\left[ D\left(P(Y | X) \,\, || \,\, P(Y) \right)  \right] \\
&=& \sum_{x\in \mathcal{X}}\Prob(X=x) D \left( P(Y|X=x) \, ||\, P(Y ) \right)
\end{eqnarray*}
\end{fact}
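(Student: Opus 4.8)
The plan is to unfold the definition \eqref{eq: first mutual information definition} of mutual information as a Kullback--Leibler divergence and then simplify the resulting Radon--Nikodym derivative using the fact that $X$ is supported on the finite set $\mathcal{X}$. Concretely, $I(X;Y) = D(P(X,Y)\,||\,P(X)P(Y))$ is, by the definition of KL divergence, the expectation under the joint law $P(X,Y)$ of $\log \frac{dP(X,Y)}{d(P(X)P(Y))}$. I would first fix a $\sigma$--finite measure $\mu$ on $\Y$ that dominates $P(Y)$ and each conditional $P(Y|X=x)$ (for instance $\mu = P(Y)$ itself works, as noted below), and write $p(\cdot\mid x)$ and $p(\cdot)$ for the densities of $P(Y|X=x)$ and $P(Y)$ with respect to $\mu$. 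Because $X$ is discrete, the joint law has density $(x,y)\mapsto \Prob(X=x)\,p(y\mid x)$ with respect to the product of counting measure on $\mathcal{X}$ and $\mu$, while the product of marginals has density $(x,y)\mapsto \Prob(X=x)\,p(y)$. The crucial observation is that in the ratio of these two densities the factor $\Prob(X=x)$ cancels, so that
\[
\frac{dP(X,Y)}{d(P(X)P(Y))}(x,y) = \frac{p(y\mid x)}{p(y)}.
\]

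With this in hand, the second step is pure bookkeeping: substitute the derivative into the divergence integral and split the integration over the product space into a sum over $x\in\mathcal{X}$ weighted by $\Prob(X=x)$ and an integral over $y$ against $p(y\mid x)\,d\mu(y)$, giving
\[
I(X;Y) = \sum_{x\in\mathcal{X}} \Prob(X=x) \intop \log\!\left(\frac{p(y\mid x)}{p(y)}\right) p(y\mid x)\, d\mu(y).
\]
I would then recognize the inner integral as exactly $D\!\left(P(Y|X=x)\,||\,P(Y)\right)$, which is independent of the choice of dominating $\mu$, yielding both claimed expressions, the second being the definition of $\E_X[\,\cdot\,]$ for discrete $X$. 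Equivalently, one can phrase the whole argument through the tower property: write $I(X;Y)=\E[\log\frac{dP(X,Y)}{d(P(X)P(Y))}(X,Y)]$ and take iterated expectations, conditioning first on $X$.

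The main obstacle is the measure--theoretic justification rather than the algebra, and two points need care. First, I must confirm that the divergences are well defined, i.e. that each $P(Y|X=x)$ is absolutely continuous with respect to $P(Y)$; this is automatic here, since $P(Y)=\sum_{x}\Prob(X=x)P(Y|X=x)$ dominates every summand $\Prob(X=x)P(Y|X=x)$, so $P(Y|X=x)\ll P(Y)$ for every $x$ with $\Prob(X=x)>0$, which also legitimizes the factorization of the Radon--Nikodym derivative above. Second, the integrand $\log(p(y\mid x)/p(y))$ is not sign--definite, so interchanging the sum over $x$ with the $y$--integral (the tower--property step) is not a bare application of Tonelli. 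I would resolve this by passing to nonnegativity at the level of each conditional divergence: by Gibbs' inequality (Fact \ref{fact: nonnegative}) each inner integral $D\!\left(P(Y|X=x)\,||\,P(Y)\right)$ is nonnegative, so the outer sum is a sum of nonnegative terms and the interchange is justified (with both sides possibly equal to $+\infty$), consistent with the convention under which $I(X;Y)$ is defined.
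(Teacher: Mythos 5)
Your proof is correct, but it takes a genuinely different route from the paper's. You work directly with the definition: fix a dominating measure, observe that the factor $\Prob(X=x)$ cancels in the Radon--Nikodym derivative of the joint law with respect to the product of the marginals, and split the divergence integral over $\mathcal{X}\times\Y$ into a finite sum of slice integrals, one per $x\in\mathcal{X}$. The paper never touches densities. It treats the case of discrete $Y$ as elementary and lifts it to general $Y$ by quantization, following \citet{gray2011entropy}: KL divergence (hence mutual information) is the supremum of the divergences of quantized versions of $Y$ over finite partitions and is monotone under refinement; the paper picks, for each $x$, a sequence of partitions whose quantized conditional divergences converge to $D\left(P(Y|X=x)\,||\,P(Y)\right)$, passes to a common refinement, applies the discrete identity to the quantized pair, and sandwiches $I(X;Y)$ so that all inequalities collapse to equalities. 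Your approach buys brevity and transparency --- the cancellation of $\Prob(X=x)$ in the likelihood ratio is exactly why the identity holds --- and your mixture observation $P(Y)=\sum_{x}\Prob(X=x)P(Y|X=x)$ cleanly re-derives the absolute-continuity fact the paper records separately at the end of Section 3. Its cost is the measure-theoretic care you flag, and there one step needs tightening: Gibbs' inequality (nonnegativity of each inner divergence) by itself does not license the interchange of the sum over $x$ with the $y$-integral; it only shows the right-hand side is well defined. The correct repair is to note that the negative part of each KL integrand is integrable, since $\left(t\log t\right)^{-}\leq e^{-1}$ pointwise, so each slice integral is well defined with finite negative part; the interchange is then finite additivity over the slices $\{x\}\times\Y$, applying Tonelli separately to the positive and negative parts. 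This is a two-line fix, not a conceptual gap, so your argument stands as a shorter, self-contained alternative to the paper's quantization argument, which in turn avoids handling dominating measures and integrability explicitly by outsourcing them to the machinery of its reference.
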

While Facts \ref{fact: entropy bounds} and \ref{fact: mutual information to KL}, are standard properties of mutual information, it's worth highlighting their surprising power. It's useful to think $X$ as being $A^*$, the optimal action, and $Y$ as being $Y_{t,a}$, the observation  when selecting some action $a$. Then, combining these properties, we see that the next observation $Y$ is expected to greatly reduce uncertainty about the optimal action $A^*$ if and only if the distribution of $Y$ varies greatly depending on the realization of $A^*$, in the sense that $D \left( P(Y  | A^*=a^*) \, ||\, P(Y ) \right)$ is large on average. This fact is crucial to our analysis. 

One implication of the next fact is that the expected reduction in entropy from observing the outcome $Y_{t,a}$ is always at least as large as that from observing the reward $R(Y_{t,a})$. 
\begin{fact}(Weak Version of the Data Processing Inequality) If $Z=f(Y)$ for a deterministic function $f$, then $I(X; Y) \geq I(X; Z)$. If $f$ is invertible, so $Y$ is also a deterministic function of $Z$, then $I(X; Y) = I(X; Z)$. 
\end{fact}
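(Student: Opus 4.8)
The plan is to prove the inequality through the chain rule of mutual information (Fact~\ref{fact: chain rule}), applied to the compound observation $(Y,Z)$ and expanded in the two possible orders. First I would apply the chain rule with $Z_1=Y$ and $Z_2=Z$ to obtain
\[
I(X;(Y,Z)) = I(X;Y) + I(X;Z \mid Y),
\]
and separately with $Z_1=Z$ and $Z_2=Y$ to obtain
\[
I(X;(Y,Z)) = I(X;Z) + I(X;Y \mid Z).
\]
Equating the two right-hand sides reduces the entire claim to controlling the two conditional mutual information terms.

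The crucial observation is that, because $Z=f(Y)$ is a deterministic function of $Y$, the sigma-algebra generated by $(Y,Z)$ coincides with that generated by $Y$ alone. Hence the posterior law of $X$ given $(Y,Z)$ equals the posterior given $Y$, so that $H(X\mid Y,Z)=H(X\mid Y)$ and therefore, by the definition of conditional mutual information, $I(X;Z\mid Y)=H(X\mid Y)-H(X\mid Y,Z)=0$. Substituting into the first expansion yields $I(X;(Y,Z))=I(X;Y)$, and combining this with the second expansion gives the identity
\[
I(X;Y) = I(X;Z) + I(X;Y \mid Z).
\]
It then remains only to note that $I(X;Y\mid Z)\geq 0$: as recorded just after its definition, conditional mutual information is an average over $Z$ of Kullback--Leibler divergences, each nonnegative by Gibbs' inequality (Fact~\ref{fact: nonnegative}). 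This establishes $I(X;Y)\geq I(X;Z)$.

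For the invertible case I would simply run the same argument with the roles of $Y$ and $Z$ reversed: if $f$ is invertible then $Y=f^{-1}(Z)$ is a deterministic function of $Z$, so the identical reasoning gives $I(X;Y\mid Z)=0$, and the displayed identity collapses to $I(X;Y)=I(X;Z)$.

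The main obstacle is the measure-theoretic step that a deterministic function $Z=f(Y)$ contributes nothing once $Y$ is already observed, i.e. that $H(X\mid Y,Z)=H(X\mid Y)$. Since $Y$ is allowed to be a general, possibly non-discrete random variable, this cannot be argued combinatorially; instead it rests on the fact that the conditional distribution of $X$ is $\sigma(Y)$-measurable together with $\sigma(Y,Z)=\sigma(Y)$. Once that point is granted, everything else is routine bookkeeping with the chain rule and the nonnegativity of divergence.
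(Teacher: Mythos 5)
Your proof is correct. Note that the paper itself does not prove this fact: it is stated in Section 3 as a standard property, with the reader referred to chapter 2 of Cover and Thomas for the discrete case and chapter 5 of Gray for general random variables. Your argument --- expanding $I(X;(Y,Z))$ by the chain rule in both orders, killing $I(X;Z\mid Y)$ because $\sigma(Y,Z)=\sigma(Y)$ when $Z=f(Y)$, and invoking nonnegativity of conditional mutual information via its representation as an expected Kullback--Leibler divergence --- is exactly the standard textbook proof that those references give, so there is nothing to contrast. Two small points worth making explicit: $f$ must be measurable for $Z$ to be a random variable and for $\sigma(Z)\subseteq\sigma(Y)$ to hold, and the identity $H(X\mid Y,Z)=H(X\mid Y)$ is unproblematic here precisely because the paper restricts $X$ to finite support, so both conditional entropies are well-defined expectations of bounded quantities; your closing paragraph identifies this as the only delicate step, and your resolution of it is the right one. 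The same symmetry argument handles the invertible case correctly.
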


We close this section by stating a fact that guarantees $D \left( P(Y|X=x) \, ||\, P(Y ) \right)$ is well defined. It follows from a general property of conditional probability: for any random variable $Z$ and event $E \subset \Omega$, if  $\Prob\left( E  \right) = 0$ then $\Prob\left( E| Z\right)=0$ almost surely. 
\begin{fact}
For any $x \in \mathcal{X}$ with $\Prob(X=x)>0$, $P(Y| X=x) $ is absolutely continuous with respect to $P(Y)$. 
\end{fact}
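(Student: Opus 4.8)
The plan is to unwind the definition of absolute continuity directly: I need to show that every $P(Y)$--null set is also a $P(Y|X=x)$--null set whenever $\Prob(X=x)>0$. So I would fix a measurable subset $B$ of $\Y$ with $P(Y)(B) = \Prob(Y \in B) = 0$, and introduce the corresponding event $E = \{\omega \in \Omega : Y(\omega) \in B\}$, so that $\Prob(E) = 0$. The goal then reduces to proving $\Prob(Y \in B \mid X=x) = P(Y|X=x)(B) = 0$ for each atom $x$.

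The key step is to apply the general property of conditional probability quoted just before the statement, taking the conditioning variable there to be $X$ itself. Since $\Prob(E) = 0$, that property yields $\Prob(E \mid X) = 0$ almost surely, where $\Prob(E\mid X)$ denotes the $\sigma(X)$--measurable random variable representing the conditional probability of $E$ given $X$. This does the substantive work for free.

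What remains is to convert this almost--sure statement into the pointwise statement I actually want, and here I would lean on the assumption that $X$ is supported on the finite set $\mathcal{X}$. For a discrete conditioning variable the conditional probability admits the explicit version $\Prob(E\mid X) = \Prob(E\mid X=x)$ on each atom $\{X=x\}$, with $\Prob(E\mid X=x) = \Prob(E \cap \{X=x\})/\Prob(X=x)$. A set of realizations of $X$ carries probability zero precisely when it contains no atom of positive mass, so the almost--sure identity $\Prob(E\mid X)=0$ forces $\Prob(E\mid X=x)=0$ for every $x$ with $\Prob(X=x)>0$. Rewriting this as $P(Y\mid X=x)(B) = 0$ and recalling that $B$ was an arbitrary $P(Y)$--null set establishes the claimed absolute continuity.

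The only delicate point is this last translation from ``almost surely in $X$'' to ``for every atom of $X$''; everything preceding it is a direct invocation of the stated fact. Because $X = A^*$ ranges over a finite action set throughout the paper, the discreteness that makes this translation routine is always at hand, so I expect no genuine obstacle beyond the measure--theoretic bookkeeping of writing down the explicit discrete version of $\Prob(E\mid X)$.
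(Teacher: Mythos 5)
Your proposal is correct and matches the paper's own argument: the paper proves this fact exactly by invoking the quoted property of conditional probability (that a $\Prob$--null event $E = \{Y \in B\}$ has $\Prob(E \mid X) = 0$ almost surely) with the conditioning variable taken to be $X$. The only thing you add is the explicit bookkeeping translating the almost--sure statement to each atom $x$ with $\Prob(X=x)>0$, which the paper leaves implicit and which is routine since $X$ is supported on a finite set.
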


\subsection{Notation under posterior distributions}\label{subsec: posterior notation}
As shorthand, we let
\[ 
\Prob_{t}(\cdot) = \Prob(\cdot | \hist) = \Prob( \cdot |\,   A_1, Y_{1, A_1},...,A_{t-1}, Y_{t-1, A_{t-1}} )
\] 
and $\E_{t}\left[ \cdot \right] = \E[ \cdot | \hist  ]$.  As before, we will define some simple shorthand notation for the distribution function of a random variable under $\Prob_{t}$. Let $P_{t}(X) = \Prob_{t}( X \in \cdot)$, $P_{t}(X|Y) = \Prob_{t}( X \in \cdot |Y)$ and $P_{t}(X|Y=y) = \Prob_{t}( X \in \cdot |Y=y)$.

The definitions of entropy and mutual information implicitly depend on some base measure over the sample space $\Omega$. We will define special notation to denote entropy and mutual information under the posterior measure $\Prob_{t}(\cdot)$. Define
\begin{eqnarray*}
H_{t}(X) &=& -\sum_{x\in \mathcal{X}}\Prob_{t}(X=x) \log \Prob_{t}(X=x)\\
H_{t}(X|Y) &=& \E_{t}\left[-\sum_{x\in \mathcal{X}}\Prob_{t}(X=x|Y) \log \Prob_{t}(X=x|Y) \right]\\
I_{t}(X; Y) &=& H_{t}(X) - H_{t}(X|Y).
\end{eqnarray*}
Because these quantities depend on the realizations of $A_1, Y_{1, A_1},...,A_{t-1}, Y_{t-1, A_{t-1}}$, they are random variables. By taking their expectation, we recover the standard definition of conditional entropy and conditional mutual information: 
\begin{eqnarray*}
\E [H_{t}(X)] &=& H(X| A_1, Y_{1, A_1},...,A_{t-1}, Y_{t-1, A_{t-1}}) \\
\E [I_{t}(X; Y)] &=& I\left(X; Y  | A_1, Y_{1, A_1},...,A_{t-1}, Y_{t-1, A_{t-1}}   \right).
\end{eqnarray*}




\section{Thompson Sampling} 
The Thompson sampling algorithm simply samples actions according to the posterior probability they are optimal. In particular, actions are chosen randomly at time $t$ according to the sampling distribution $\pi^{\rm TS}_{t} = \Prob(A_t = \cdot | \hist)$. By definition, this means that for each $a \in \A$,  $\Prob (A_{t}=a | \hist)=\Prob (A^*=a | \hist)$. This algorithm is sometimes called {\it probability matching} because the action selection distribution is {\it matched} to the posterior distribution of the optimal action. 

This conceptually elegant probability matching scheme often admits a surprisingly simple and efficient implementation.  Consider the case where $\mathcal{P}=\{p_{\theta}\}_{\theta \in \Theta }$ is some parametric family of distributions . The true outcome distribution $p^*$ corresponds to a particular random index $\theta^* \in \Theta$ in the sense that $p^* = p_{\theta^*}$ almost surely. Practical implementations of Thompson sampling typically use two simple steps at each time $t$ to randomly generate an action from the distribution $\alpha_{t}$. First, an index $\hat{\theta}_{t} \sim \Prob\left( \theta^* \in \cdot \vert \hist \right)$ is sampled from the posterior distribution of the true index $\theta^*$. Then, the algorithm selects the action $A_t \in \underset{a\in \A}{\arg \max} \, \mathbb{E}\left[ R(Y_{t,a}) | \theta^* = \hat{\theta}_{t}  \right]$ that would be optimal if the sampled parameter were actually the true parameter. We next provide an example of a Thompson sampling algorithm designed to address the problem of online linear optimization under bandit feedback.

\subsection{Example of Thompson Sampling}
 Suppose each action $a \in \A \subset \mathbb{R}^d$ is defined by a $d$-dimensional feature vector, and almost surely there exists $\theta^* \in \mathbb{R}^d$ such that for each $a\in \A$, $\underset{ y \sim p_{a}^*}{\E }\left[ R(y) \right] = a^T \theta^*$. Assume $\theta^*$ is drawn from a normal distribution $N( \mu_{0}, \Sigma_{0})$. When $a$ is selected, only the realized reward $Y_{t,a}= R(Y_{t,a})  \in \mathbb{R}$ is observed. For each action $a$, reward noise $R(Y_{t,a}) - \mathbb{E}\left[R(Y_{t,a} | p^* \right]$ follows a Gaussian distribution with known variance. One can show that, conditioned on the history of observed data $\hist$, $\theta^*$ remains normally distributed.  Algorithm \ref{alg:linearPosteriorSampling} provides an implementation of Thompson sampling for this problem. The expectations in step 3 can be computed efficiently via Kalman filtering. 

\begin{figure}[H]
\algsetup{indent=2em}
\begin{algorithm}[H]
\caption{
 Linear--Gaussian Thompson Sampling}
\label{alg:linearPosteriorSampling}
\begin{algorithmic}[1]
\STATE \textbf{Sample Model}: \\
$\hat{\theta}_t \sim N(\mu_{t-1}, \Sigma_{t-1})$
\STATE \textbf{Select Action}: \protect\\
$A_{t}\in\arg\max_{a\in\A} \langle a, \hat{\theta}_{t}\rangle$ \protect\\
\STATE \textbf{Update Statistics}:  \\
$\mu_t \leftarrow \mathbb{E}[\theta^* | \hist ]$ \protect\\
$\Sigma_t \leftarrow \mathbb{E}[(\theta^*-\mu_t)(\theta^*-\mu_t)^\top | \hist ]$ \protect\\
\STATE \textbf{Increment $t$ and Goto Step 1}\\
\end{algorithmic}
\end{algorithm}
\end{figure}

Algorithm \ref{alg:linearPosteriorSampling} is efficient as long as the linear objective $\langle a, \hat{\theta}_{t}\rangle$ can be maximized efficiently over the action set $\A$. For this reason, the algorithm is implementable in important cases where other popular approaches, like the $\text{ConfidenceBall}_2$ algorithm of \citet{dani2008stochastic},  are computationally intractable.   Because the posterior distribution of $\theta^*$ has a closed form, Algorithm \ref{alg:linearPosteriorSampling} is particularly efficient. Even when the posterior distribution is complex, however, one can often generate samples from this distribution using Markov chain Monte Carlo algorithms, enabling efficient implementations of Thompson sampling. A more detailed discussion of the strengths and potential advantages of Thompson sampling can be found in earlier work \citep{scott2010modern, chapelle2011empirical, russo2013, gopalan2014thompson}.

\section{The Information Ratio and a General Regret Bound} 
Our analysis will relate the expected regret of Thompson sampling of Thompson sampling to its expected information gain: the expected reduction in the entropy of the posterior distribution of $A^*$. The relationship between these quantities is characterized by what we call the {\it information ratio}, 
\[
\Gamma_{t}  := \frac{\E_{t} \left[ R(Y_{t,A^*}) -R(Y_{t,A_t} )\right]^2 }{I_{t}\left( A^* ; (A_t, Y_{t, A_t})  \right)}
\]
which is the ratio between the square of expected regret and information gain in period $t$. Recall that, as described in Subsection \ref{subsec: posterior notation}, the subscript $t$ on $\E_{t}$ and $I_{t}$ indicates that these quantities are evaluated under the posterior measure $\Prob_{t}(\cdot)= \Prob(\cdot | \hist)$. 

Notice that if the information ratio is small, Thompson sampling can only incur large regret when it is expected to gain a lot of information about which action is optimal. This suggests its expected regret is bounded in terms of the maximum amount of information any algorithm could expect to acquire, which is at most the entropy of the prior distribution of the optimal action. Our next result shows this formally. We provide a general upper bound on the expected regret of Thompson sampling that depends on the time horizon $T$, the entropy of the prior distribution of $A^*$, $H(A^*)$, and any worst--case upper bound on the information ratio $\Gamma_{t}$. In the next section, we will provide bounds on $\Gamma_{t}$ for some of the most widely studied classes of online optimization problems.

\begin{proposition}\label{prop: regret bound}
For any $T\in \mathbb{N}$, if $\Gamma_t \leq \overline{\Gamma}$ almost surely for each $t \in \{1,..,T\}$,  
$$\E \left[{\rm Regret}(T, \pi^{\rm TS})  \right] \leq  \sqrt{\overline{\Gamma} H(\alpha_1) T}.$$
\end{proposition}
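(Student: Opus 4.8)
The plan is to turn the definition of the information ratio into a per-period regret bound, sum over $t$, and then control the total information gain by the entropy of $A^*$ via the chain rule of mutual information. Throughout I write $Z_t := (A_t, Y_{t, A_t})$ for the observation acquired at time $t$, and note that $\alpha_1 = \pi^{\rm TS}_1 = \Prob(A^* \in \cdot)$ is the prior distribution of the optimal action, so $H(\alpha_1) = H(A^*)$.

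First I would establish the per-period bound. Rearranging the definition of $\Gamma_t$ gives the identity $\left(\E_t\left[R(Y_{t,A^*}) - R(Y_{t,A_t})\right]\right)^2 = \Gamma_t \, I_t\!\left(A^*; Z_t\right)$, and the hypothesis $\Gamma_t \le \overline{\Gamma}$ almost surely yields $\E_t\left[R(Y_{t,A^*}) - R(Y_{t,A_t})\right] \le \sqrt{\overline{\Gamma}\, I_t(A^*; Z_t)}$. Working with the squared form rather than dividing directly sidesteps any issue when $I_t(A^*;Z_t)=0$, since the identity then forces the expected per-period regret to vanish. By the tower property $\E\left[{\rm Regret}(T,\pi^{\rm TS})\right] = \sum_{t=1}^{T} \E\left[\E_t\left[R(Y_{t,A^*}) - R(Y_{t,A_t})\right]\right]$, so summing and applying the Cauchy--Schwarz inequality over the $T$ terms gives $\E\left[{\rm Regret}(T,\pi^{\rm TS})\right] \le \sqrt{\overline{\Gamma}}\,\E\!\left[\sum_{t=1}^{T}\sqrt{I_t(A^*;Z_t)}\right] \le \sqrt{\overline{\Gamma}}\,\E\!\left[\sqrt{T \sum_{t=1}^{T} I_t(A^*;Z_t)}\right]$. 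Concavity of $\sqrt{\cdot}$ together with Jensen's inequality then moves the expectation inside the root, producing $\E\left[{\rm Regret}(T,\pi^{\rm TS})\right] \le \sqrt{\overline{\Gamma}\, T\, \sum_{t=1}^{T} \E\left[I_t(A^*;Z_t)\right]}$.

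The crux is to show $\sum_{t=1}^{T} \E\left[I_t(A^*;Z_t)\right] \le H(A^*)$. Here I invoke the identity from Subsection \ref{subsec: posterior notation}, namely $\E\left[I_t(A^*;Z_t)\right] = I(A^*; Z_t \mid \hist)$, and observe that $\hist = \sigma(Z_1,\dots,Z_{t-1})$, so $I(A^*; Z_t \mid \hist) = I(A^*; Z_t \mid Z_1,\dots,Z_{t-1})$. The chain rule (Fact \ref{fact: chain rule}) then telescopes these into a single mutual information, $\sum_{t=1}^{T} \E\left[I_t(A^*;Z_t)\right] = I\!\left(A^*; (Z_1,\dots,Z_T)\right)$. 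Finally, the entropy-reduction form of mutual information (Fact \ref{fact: mutual information to entropy}) gives $I\!\left(A^*; (Z_1,\dots,Z_T)\right) = H(A^*) - H(A^* \mid Z_1,\dots,Z_T) \le H(A^*)$, since conditional entropy is an average of nonnegative entropies (Fact \ref{fact: entropy bounds}). Substituting $\sum_{t=1}^{T}\E\left[I_t(A^*;Z_t)\right] \le H(A^*) = H(\alpha_1)$ into the displayed bound completes the argument.

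I expect the main subtlety to be the bookkeeping that makes the chain rule applicable: one must verify that $\hist$ is precisely the $\sigma$-algebra generated by the past observations $Z_1,\dots,Z_{t-1}$, so that the per-period conditional mutual information terms align as consecutive terms of one chain-rule expansion, and that the posterior-information identity $\E\left[I_t(A^*;Z_t)\right] = I(A^*;Z_t\mid\hist)$ is applied correctly. Everything else --- Cauchy--Schwarz, Jensen, and the bound $I \le H$ --- is routine given the facts already established.
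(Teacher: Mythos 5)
Your proof is correct and follows essentially the same route as the paper's: per-period regret expressed via $\Gamma_t$, tower property, Cauchy--Schwarz, then the chain rule and entropy-reduction identities to bound cumulative information gain by $H(A^*)$. The only cosmetic differences are that you split the paper's single Cauchy--Schwarz step into Cauchy--Schwarz plus Jensen and explicitly flag the $I_t(A^*;Z_t)=0$ edge case and the identification $H(\alpha_1)=H(A^*)$, all of which the paper handles implicitly.
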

\begin{proof} Recall that $\E_{t}[\cdot] = \E[\cdot | \hist]$ and we use $I_{t}$ to denote mutual information evaluated under the base measure $\Prob_{t}$. Then,
\begin{eqnarray*}
\E \left[{\rm Regret}(T, \pi^{\rm TS})  \right] \overset{(a)}{=} \mathbb{E} \sum_{t=1}^{T} \E_{t} \left[ R(Y_{t, A^*}) -R(Y_{t,A_t} )\right]  &=& \mathbb{E}\sum_{t=1}^{T} \sqrt{\Gamma_{t} I_{t}\left( A^* ; (A_t, Y_{t, A_t})  \right)} \\ 
&\leq& \sqrt{\overline{\Gamma}}\left( \mathbb{E} \sum_{t=1}^{T} \sqrt{I_{t}\left( A^* ; (A_t, Y_{t, A_t})  \right)} \right) \\
&\overset{(b)}{\leq}& \sqrt{\overline{\Gamma}T \mathbb{E} \sum_{t=1}^{T} I_{t}\left( A^* ; (A_t, Y_{t, A_t})  \right)},
\end{eqnarray*}
where (a) follows from the tower property of conditional expectation, and (b) follows from the Cauchy-Schwartz inequality. We complete the proof by showing that expected information gain cannot exceed the entropy of the prior distribution. For the remainder of this proof, let $Z_t = (A_t, Y_{t,A_t})$. Then, as discussed in Subsection \ref{subsec: posterior notation},
\[
 \mathbb{E}\left[ I_{t}\left( A^* ;Z_t \right) \right] =  I\left( A^* ; Z_t | Z_1,...,Z_{t-1}  \right),
\]
and therefore
\begin{eqnarray*}
\mathbb{E}\sum_{t=1}^{T} I_{t}\left( A^* ; Z_t  \right) =  \sum_{t=1}^{T} I\left( A^* ; Z_t | Z_1,...,Z_{t-1}  \right)
&\overset{(c)}{=}&  I\left(A^* \, ;\,  Z_1,...Z_{T}    \right) \\ 
&=& H(A^*) - H(A^* | Z_1,...Z_{T}) \\
&\overset{(d)}{\leq}&  H(A^*),
\end{eqnarray*}
where (c) follows from the chain rule for mutual information (Fact \ref{fact: chain rule}), and (d) follows from the non-negativity of entropy (Fact \ref{fact: entropy bounds}). 
\end{proof}


\section{Bounding the Information Ratio}
This section establishes upper bounds on the information ratio in several important settings. This yields explicit regret bounds when combined with Proposition \ref{prop: regret bound}, and also helps to clarify the role the information ratio plays in our results: it roughly captures the extent to which sampling some actions allows the decision maker to make inferences about {\it different} actions. In the worst case, the information ratio depends on the number of actions, reflecting the fact that actions could provide no information about others. For problems with full information, the information ratio is bounded by a numerical constant, reflecting that sampling one action perfectly reveals the rewards that would have been earned by selecting any other action. The problems of  online linear optimization under ``bandit feedback'' and under ``semi--bandit feedback'' lie between these two extremes, and the information ratio provides a natural measure of  each problem's information structure. In each case, our bounds reflect that Thompson sampling is able to automatically exploit this structure. 

For each problem setting, we will compare our upper bounds on expected regret with known lower bounds. Some of these lower bounds were developed and stated in an adversarial framework, but were proved using the {\it probabilistic method}; authors fixed a family of distributions $\mathcal{P}$ and an initial distribution over $p^*$ and lower bounded the expected regret under this environment of any algorithm. This provides lower bounds on $\inf_{\pi} \E \left[{\rm Regret}(T, \pi) \right]$ in our framework\, at least for particular problem instances. Unfortunately, we are not aware of any general prior-dependent lower bounds, and this remains an important direction for the field.

Our bounds on the information ratio $\Gamma_{t}$ will hold at any time $t$ and under an posterior measure $\Prob_{t}$. To simplify notation, in our proofs we will omit the subscript $t$ from $\E_{t}, \Prob_{t}, P_t, A_t, Y_t, H_t,$ and $I_t$.

\subsection{An Alternative Representation of the Information Ratio}
Recall that the information ratio is defined to be  
\[ \frac{\E \left[ R(Y_{A^*}) -R(Y_{A} )\right]^2}{I\left(A^* ; (A, Y_{A})  \right)} .\]
The following proposition expresses the information ratio of Thompson sampling in a form that facilitates further analysis. 
The proof uses that Thompson sampling matches the action selection distribution to the posterior distribution of the optimal action, in the sense that $\Prob\left(A^*=a \right)=\Prob\left(A=a \right)$.
\begin{proposition}\label{prop: rewrite regret}
\begin{eqnarray*}
I\left(A^* ; (A, Y_{A})  \right)&=& \sum_{a\in \A} \Prob(A=a)I(A^*; Y_a)  \\
&=& \sum_{a, a^*\in \A}\Prob(A^*=a)\Prob(A^*=a^*)\left[ D\left( P(Y_{a} | A^*=a^*) \, || \,  P(Y_{a})  \right) \right].
\end{eqnarray*}
and
\[ \E \left[ R(Y_{A^*}) -R(Y_{A} )\right] = \sum_{a\in \A} \Prob(A^*=a) \left( \E\left[R(Y_a) | A^*=a \right] - \E[ R(Y_a)]\right. \]
\end{proposition}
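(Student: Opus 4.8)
The plan is to reduce both displayed identities to two structural facts about the chosen action $A \equiv A_t$ under the posterior measure $\Prob \equiv \Prob_t$ (recall the subscript $t$ is suppressed). The first fact is that $A$ is independent of the pair $\big(A^*, (Y_a)_{a\in\A}\big)$ under $\Prob_t$. This follows from the problem formulation: $A_t$ is measurable with respect to $\sigma(\hist, U_t)$, and given $\hist$ it is random only through $U_t$, which is jointly independent of $p^*$, of all outcomes, and of $\{U_s\}_{s\neq t}$ (and hence of $\hist$ itself). Thus, conditional on $\hist$, the sampled action $A$ is a fresh draw independent of $(A^*, (Y_a)_a)$, where $A^*$ is a deterministic function of $p^*$. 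The second fact is the defining Thompson sampling property $\Prob(A=a)=\Prob(A^*=a)$ for every $a\in\A$.

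For the mutual-information identity I would first apply the chain rule (Fact \ref{fact: chain rule}) to write $I\big(A^*;(A,Y_A)\big)=I(A^*;A)+I(A^*;Y_A\mid A)$. Independence of $A$ and $A^*$ forces $I(A^*;A)=0$ (by Gibbs, Fact \ref{fact: nonnegative}), leaving only the conditional term. Expanding it as a weighted sum over the realizations $\{A=a\}$ gives $\sum_a \Prob(A=a)\,I(A^*;Y_A\mid A=a)$; on the event $\{A=a\}$ we have $Y_A=Y_a$, and since $A$ is independent of $(A^*,(Y_a)_a)$, conditioning on $\{A=a\}$ leaves the joint law of $(A^*,Y_a)$ unchanged (equivalently, Fact \ref{fact: conditional MI under independence} applies), so $I(A^*;Y_A\mid A=a)=I(A^*;Y_a)$. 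This yields the first equality. The second equality then follows by substituting the KL-divergence form $I(A^*;Y_a)=\sum_{a^*}\Prob(A^*=a^*)\,D\big(P(Y_a\mid A^*=a^*)\,||\,P(Y_a)\big)$ from Fact \ref{fact: mutual information to KL}, and replacing $\Prob(A=a)$ by $\Prob(A^*=a)$ using the Thompson sampling property.

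For the regret identity I would expand each term separately. Writing $R(Y_{A^*})=\sum_a \mathbf{1}\{A^*=a\}\,R(Y_a)$ and taking expectations gives $\E[R(Y_{A^*})]=\sum_a \Prob(A^*=a)\,\E[R(Y_a)\mid A^*=a]$. For the second term, conditioning on $\{A=a\}$ and using independence of $A$ from the outcomes gives $\E[R(Y_A)\mid A=a]=\E[R(Y_a)\mid A=a]=\E[R(Y_a)]$, so $\E[R(Y_A)]=\sum_a \Prob(A=a)\,\E[R(Y_a)]$; replacing $\Prob(A=a)$ by $\Prob(A^*=a)$ and subtracting the two expansions produces the stated difference.

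The only genuine subtlety — rather than a difficulty — is the independence claim in the first fact: it is precisely what distinguishes the chosen action $A$, a fresh posterior sample drawn through the external randomization $U_t$, from the optimal action $A^*$, even though the two share the same marginal law under $\Prob_t$. Once this is established from the filtration/randomization setup, both identities reduce to bookkeeping together with Facts \ref{fact: conditional MI under independence}, \ref{fact: chain rule}, and \ref{fact: mutual information to KL}.
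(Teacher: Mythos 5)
Your proposal is correct and follows essentially the same route as the paper's own proof: chain rule to split off $I(A^*;A)$, independence of $A$ from $(A^*,(Y_a)_{a\in\A})$ conditional on the history to kill that term and to reduce $I(A^*;Y_A\mid A=a)$ to $I(A^*;Y_a)$, then Fact \ref{fact: mutual information to KL} and the probability-matching property $\Prob(A=a)=\Prob(A^*=a)$, with the regret identity handled by the same expansion and independence argument. The subtlety you flag (the chosen action being a fresh posterior sample, independent of $A^*$ despite sharing its law) is exactly the observation the paper's proof opens with.
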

The numerator of the information ratio measures the average difference between rewards generated from $P(Y_a)$, the posterior predictive distribution at $a$, and $P(Y_a \vert A^*=a)$, the posterior predictive distribution at $a$ conditioned on $a$ being the optimal action. It roughly captures how much  knowing that the {\it selected action is optimal} influences the expected reward observed. The denominator measures how much, on average, knowing {\it which action is optimal} changes the observations at the selected action. Intuitively, the information ratio tends to be small when knowing which action is optimal significantly influences the anticipated observations at many other actions. 

It's worth pointing out that this form of the information ratio bears a superficial resemblance to  fundamental complexity terms in the multi-armed bandit literature. The  results of \citet{lai1985asymptotically} and \citet{agrawal1989asymptotically} show the optimal asymptotic growth rate of regret is characterized by a ratio where the numerator depends on the difference between means of the reward distributions and the denominator depends on Kullback--Leibler divergences.
\begin{proof} 
Both proofs will use that the action $A$ is selected based on past observations and independent random noise. Therefore, conditioned on the history, $A$  is jointly independent of $A^*$ and the outcome vector $Y\equiv (Y_{a})_{a\in \A}$. 
\begin{eqnarray*}
I(A^* ; (A, Y_A)) &\overset{(a)}{=}& I(A^*; A) + I(A^* ; Y_A | A)  \\
&\overset{(b)}{=}&I(A^* ; Y_A | A)  \\
&=& \sum_{a\in \A}\Prob(A=a)I(A^* ; Y_A| A=a)\\
&\overset{(c)}{=}& \sum_{a\in \A}\Prob(A=a)I(A^* ; Y_a)\\
&\overset{(d)}{=}& \sum_{a\in \A}\Prob(A=a)\left(\sum_{a^* \in \A}\Prob(A^*=a^*) D\left(P(Y_a| A^*=a^*) \, || \, P(Y_a)   \right)\right)\\
&=&\sum_{a, a^*\in \A}\Prob(A^*=a)\Prob(A^*=a^*)\left[ D\left( P(Y_{a} | A^*=a^*) \, || \,  P(Y_{a})  \right) \right], 
\end{eqnarray*}
where (a) follows from the chain rule for mutual information (Fact \ref{fact: chain rule}), (b) uses that $A$ and $A^*$ are independent and the mutual information between independent random variables is zero (Fact \ref{fact: conditional MI under independence}), (c)  uses Fact \ref{fact: conditional MI under independence} and that $A$ is jointly independent of $Y$ and $A^*$, and equality (d) uses Fact \ref{fact: mutual information to KL}. Now, the numerator can be rewritten as, 
\begin{eqnarray*}
\E \left[ R(Y_{A^*}) -R(Y_{A} )\right] &=& \sum_{a\in \A} \Prob(A^*=a) \E\left[R(Y_a) | A^*=a \right] - \sum_{a\in \A}\Prob(A=a)\E[ R(Y_a) | A=a] \\
&=& \sum_{a\in \A} \Prob(A^*=a)\left( \E\left[R(Y_a) | A^*=a \right] - \E[ R(Y_a)]\right),
\end{eqnarray*}
where the second equality uses that $\Prob(A=a)=\Prob(A^*=a)$ by the definition of Thompson sampling, and that $Y$ is independent of the chosen action $A$. 
\end{proof}

\subsection{Preliminaries}  
Here we state two basic facts that are used in bounding the information ratio. Proofs of both results are provided in the appendix for completeness. 

The first fact lower bounds the Kullback--Leibler divergence between two bounded random variables in terms of the difference between their means. It follows trivially from an application of pinsker's inequality. 
\begin{fact}\label{fact: DME to DKL} For any distributions $P$ and $Q$ such that that $P$ is absolutely continuous with respect to $Q$,  any random variable $X: \Omega \rightarrow \mathcal{X}$ and any $g:\mathcal{X}\rightarrow \mathbb{R}$ such that $\sup g  - \inf g \leq 1$, 
$$\E_{P} \left[ g(X) \right] - \E_{Q} \left[ g(X) \right]  \leq \sqrt{\frac{1}{2} D \left( P || Q \right) },$$
where $\E_{P}$ and $\E_{Q}$ denote the expectation operators under $P$ and $Q$. \end{fact}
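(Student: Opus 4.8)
The plan is to prove $\E_P[g(X)] - \E_Q[g(X)] \leq \sqrt{\tfrac{1}{2} D(P\|Q)}$ by combining a variational bound on the mean difference with Pinsker's inequality relating KL divergence to total variation. The key observation is that the hypothesis $\sup g - \inf g \leq 1$ lets me control the mean difference purely in terms of how far apart $P$ and $Q$ are in total variation, and then Pinsker converts total variation into KL divergence.

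First I would bound the difference of means by the total variation distance. Writing $\|P-Q\|_{\rm TV} = \sup_{E}|P(E)-Q(E)|$ and letting $P_X, Q_X$ denote the pushforward distributions of $X$ under $P$ and $Q$, I would argue that for any $g$ with $\sup g - \inf g \leq 1$,
\[
\E_P[g(X)] - \E_Q[g(X)] \;\leq\; \left(\sup g - \inf g\right)\|P_X - Q_X\|_{\rm TV} \;\leq\; \|P_X - Q_X\|_{\rm TV}.
\]
The cleanest way to see the first inequality is to note that subtracting the constant $\inf g$ from $g$ does not change the left-hand side (since $\E_P[c] - \E_Q[c] = 0$ for constants), so without loss of generality $g$ takes values in $[0,1]$, and then the integral representation $\E_P[g(X)] - \E_Q[g(X)] = \int_0^1 \left(P(g(X) > s) - Q(g(X) > s)\right)\,ds$ bounds each integrand by $\|P_X - Q_X\|_{\rm TV}$.

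Next I would apply Pinsker's inequality in the form $\|P_X - Q_X\|_{\rm TV} \leq \sqrt{\tfrac{1}{2} D(P_X \| Q_X)}$, and then invoke the data processing inequality for KL divergence, $D(P_X\|Q_X) \leq D(P\|Q)$, since $X$ is a measurable function of $\omega$ (pushing forward a measure can only decrease divergence). Chaining these gives the claimed bound. The main obstacle, modest as it is, is being careful about the reduction to $g \in [0,1]$ and the measure-theoretic justification of the layer-cake representation for general (not necessarily discrete) $X$; everything else is a direct citation of Pinsker and data processing. Since the statement only claims the inequality (not the constant's tightness), I would not need to track equality conditions.
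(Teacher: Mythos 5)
Your proof is correct and takes essentially the same route as the paper's: both bound the mean difference by the total variation distance using the unit range of $g$, and then invoke Pinsker's inequality to convert total variation into $\sqrt{\tfrac{1}{2}D(P\|Q)}$. The only differences are cosmetic --- the paper centers $g$ to take values in $[-1/2,1/2]$ and applies Pinsker directly to $P$ and $Q$ on $\Omega$ (so it never needs your pushforward/data-processing step), whereas you shift $g$ into $[0,1]$, use the layer-cake representation, and pass through the pushforward measures $P_X, Q_X$.
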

Because of Assumption \ref{assum: bounded rewards}, this fact shows 
\[
\E \left[ R(Y_{a}) | A^*=a^* \right] - \E\left[R(Y_{a})  \right] \leq \sqrt{\frac{1}{2}D\left(P(Y_{a} | A^* = a^*)   \,\, || \,\, P(Y_{a})  \right)}.
\]

By the Cauchy--Schwartz inequality, for any vector $x \in \mathbb{R}^n$, $\sum_{i}x_{i} \leq \sqrt{n} \| x\|_{2}$.   The next fact provides an analogous result for matrices. For any rank $r$ matrix $M \in \mathbb{R}^{n\times n}$ with singular values $\sigma_{1},...,\sigma_{r}$,  let
\begin{eqnarray*}
\| M\|_{*} : = \sum_{i=1}^{r} \sigma_{i}, \hspace{6pt}&\hspace{6pt}  \| M \|_F := \sqrt{ \sum_{k=1}^{m} \sum_{j=1}^{n} M_{i,j}^2} = \sqrt{\sum_{i=1}^{r} \sigma_{i}^2},  \hspace{6pt}&\hspace{6pt}  {\rm Trace}(M):=\sum_{i=1}^{n} M_{ii},
\end{eqnarray*}
denote respectively the Nuclear norm, Frobenius norm and trace of $M$. 

\begin{fact}\label{fact: trace frobenius inequality}
For any matrix $M \in \mathbb{R}^{k\times k}$, $${\rm Trace}\left( M \right) \leq \sqrt{{\rm Rank}(M)}\| M\|_{\rm F}.$$
\end{fact}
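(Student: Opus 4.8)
The plan is to split the inequality into two stages that together reproduce, at the level of matrices, the vector bound $\sum_i x_i \leq \sqrt{n}\,\|x\|_2$ quoted just before the statement, with the singular values of $M$ playing the role of the coordinates of $x$. First I would show that the trace is dominated by the nuclear norm, ${\rm Trace}(M) \leq \|M\|_{*}$, and then apply ordinary Cauchy--Schwarz to the length-$r$ vector of singular values $(\sigma_1,\ldots,\sigma_r)$ to get $\|M\|_{*} = \sum_{i=1}^{r}\sigma_i \leq \sqrt{r}\,\sqrt{\sum_{i=1}^{r}\sigma_i^2} = \sqrt{{\rm Rank}(M)}\,\|M\|_{\rm F}$. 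Chaining the two stages yields the claim, since $r = {\rm Rank}(M)$ is exactly the number of nonzero singular values appearing in both $\|M\|_{*}$ and $\|M\|_{\rm F}$.

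All the substance lies in the first stage, ${\rm Trace}(M) \leq \|M\|_{*}$, which I would derive from the singular value decomposition. Writing $M = U S V^{T}$ with $U,V$ orthogonal and $S = {\rm diag}(\sigma_1,\ldots,\sigma_r,0,\ldots,0)$, the cyclic invariance of the trace gives ${\rm Trace}(M) = {\rm Trace}(U S V^{T}) = {\rm Trace}(S\,V^{T}U) = {\rm Trace}(SW)$, where $W := V^{T}U$ is again orthogonal. Expanding the diagonal and using that $S_{ii} = 0$ for $i > r$, this equals $\sum_{i=1}^{r}\sigma_i W_{ii}$.

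To close the first stage I would invoke the elementary fact that every diagonal entry of an orthogonal matrix satisfies $|W_{ii}| \leq 1$, which holds because each column of $W$ is a unit vector, so $W_{ii}^2 \leq \sum_{j} W_{ji}^2 = 1$. Since the $\sigma_i$ are nonnegative, $\sum_{i=1}^{r}\sigma_i W_{ii} \leq \sum_{i=1}^{r}\sigma_i |W_{ii}| \leq \sum_{i=1}^{r}\sigma_i = \|M\|_{*}$, which is the bound sought.

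The main obstacle -- indeed the only nontrivial point -- is precisely this passage from the trace to the sum of singular values. For a general, possibly non-symmetric $M$, the trace equals the sum of the eigenvalues, which need not agree with the singular values and may even have the opposite sign, so one cannot argue coordinatewise as in the vector case; the SVD-plus-orthogonality argument above is what circumvents this cleanly. (Were $M$ symmetric positive semidefinite, as is often the case in the applications, eigenvalues and singular values would coincide and the first stage would be immediate, but I would prove the general statement as stated.)
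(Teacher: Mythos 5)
Your proof is correct, and it shares the paper's overall skeleton---both arguments chain ${\rm Trace}(M) \leq \|M\|_{*}$ with the Cauchy--Schwarz bound $\|M\|_{*} \leq \sqrt{r}\,\|M\|_{\rm F}$ on the vector of singular values---but the key step ${\rm Trace}(M) \leq \|M\|_{*}$ is handled by a genuinely different argument. The paper symmetrizes: it writes ${\rm Trace}(M) = {\rm Trace}\bigl(\tfrac{1}{2}M + \tfrac{1}{2}M^{T}\bigr)$, bounds the trace of the \emph{symmetric} matrix $\tfrac{1}{2}(M+M^{T})$ by its nuclear norm using the fact that a symmetric matrix's singular values are the absolute values of its (real) eigenvalues, and then returns to $\|M\|_{*}$ via the triangle inequality for the nuclear norm. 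Your route instead attacks the non-symmetric case directly through the SVD: writing ${\rm Trace}(M) = {\rm Trace}(SW)$ with $W = V^{T}U$ orthogonal and using $|W_{ii}| \leq 1$. The trade-off is instructive. Your argument is more self-contained: it needs only the cyclic property of the trace and the elementary observation that diagonal entries of an orthogonal matrix lie in $[-1,1]$, whereas the paper's step (b) silently relies on the nuclear norm being subadditive---a true but nontrivial fact (its standard proof goes through the dual characterization of $\|\cdot\|_{*}$ or an argument much like yours). The paper's approach, on the other hand, is shorter if one is willing to take the norm properties of $\|\cdot\|_{*}$ off the shelf, and reduces everything to the transparent symmetric/eigenvalue picture. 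Both proofs correctly use $r = {\rm Rank}(M)$, the rank of $M$ itself, in the final Cauchy--Schwarz step, and your closing remark about why the eigenvalue--singular value mismatch for non-symmetric $M$ is the real obstacle is exactly the right diagnosis.
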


\subsection{Worst Case Bound}
The next proposition provides a bound on the information ratio that holds whenever rewards are bounded, but that has an explicit dependence on the number of actions. This scaling cannot be improved in general, but we  go on to show tighter bounds are possible for problems with different {\it information structures}. 
\begin{proposition}\label{prop: worst case bound}
For any $t \in \mathbb{N}$, $\Gamma_{t} \leq |\A|/2$ almost surely. 
\end{proposition}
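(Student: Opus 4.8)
The plan is to work from the alternative representation established in Proposition \ref{prop: rewrite regret}, dropping the subscript $t$ as the excerpt prescribes. Write $p_a = \Prob(A^* = a) = \Prob(A = a)$, abbreviate the Kullback--Leibler divergence $D_{a,a^*} = D\left(P(Y_a \mid A^* = a^*) \,\|\, P(Y_a)\right)$, and let $\Delta_a = \E[R(Y_a) \mid A^* = a] - \E[R(Y_a)]$. With this notation, Proposition \ref{prop: rewrite regret} gives that the numerator of $\Gamma_t$ is $\left(\sum_a p_a \Delta_a\right)^2$ while the denominator is $\sum_{a, a^*} p_a\, p_{a^*}\, D_{a,a^*}$.

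First I would bound the numerator. Applying Fact \ref{fact: DME to DKL} with the reward function $R$ (which satisfies $\sup R - \inf R \le 1$ by Assumption \ref{assum: bounded rewards}) to the pair of distributions $P(Y_a \mid A^* = a)$ and $P(Y_a)$ yields $\Delta_a \le \sqrt{\tfrac{1}{2} D_{a,a}}$. Summing against $p_a$ and squaring then bounds the numerator entirely in terms of the \emph{diagonal} divergences $D_{a,a}$, namely $\left(\sum_a p_a \Delta_a\right)^2 \le \tfrac{1}{2}\left(\sum_a p_a \sqrt{D_{a,a}}\right)^2$.

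Next I would lower bound the denominator. Since every Kullback--Leibler divergence is non-negative (Fact \ref{fact: nonnegative}), I may discard all the off-diagonal terms of the double sum and retain only those with $a^* = a$, giving $\sum_{a, a^*} p_a\, p_{a^*}\, D_{a,a^*} \ge \sum_a p_a^2 D_{a,a}$. The crucial feature here is that this lower bound involves exactly the same diagonal quantities $D_{a,a}$ that appear in the numerator bound, so the two estimates can be combined directly.

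Setting $v_a = p_a \sqrt{D_{a,a}}$, the information ratio is then at most $\left(\sum_a v_a\right)^2 \big/ \left(2 \sum_a v_a^2\right)$. A final application of the Cauchy--Schwarz inequality, in the form $\left(\sum_a 1 \cdot v_a\right)^2 \le |\A| \sum_a v_a^2$ over the $|\A|$ actions, bounds this ratio by $|\A|/2$, which is the claim. The main obstacle, and the genuine insight of the argument, is recognizing that only the diagonal divergences need to be kept: the numerator depends on each action solely through the divergence between its own conditional and marginal predictive distributions, so throwing away the off-diagonal denominator terms costs nothing essential while exposing the clean Cauchy--Schwarz structure that produces the factor $|\A|$.
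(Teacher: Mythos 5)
Your proof is correct and is essentially the paper's own argument with the steps reordered: the paper applies Cauchy--Schwarz to $\left(\sum_a \Prob(A^*=a)\Delta_a\right)^2$ first, then pads the resulting diagonal sum out to the full double sum and applies Fact \ref{fact: DME to DKL} to every pair, whereas you apply Fact \ref{fact: DME to DKL} on the diagonal first, discard the off-diagonal denominator terms, and finish with the same Cauchy--Schwarz step. The ingredients (Proposition \ref{prop: rewrite regret}, Pinsker via Fact \ref{fact: DME to DKL}, Cauchy--Schwarz against the all-ones vector, and non-negativity to pass between diagonal and full double sums) are identical, so this is the same proof in substance.
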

\begin{proof}  We bound the numerator of the information ratio by $|\A|/2$ times its denominator: 
\begin{eqnarray*}
\E \left[ R(Y_{A^*}) -R(Y_{A} )\right]^2 &\overset{(a)}{=}&  \left(\sum_{a\in \A} \Prob(A^*=a) \left( \E\left[R(Y_a) | A^*=a \right] - \E[ R(Y_a)]\right) \right)^2 \\
&\overset{(b)}{\leq}& |\A|  \sum_{a\in \A} \Prob(A^*=a)^2 \left( \E\left[R(Y_a) | A^*=a \right] - \E[ R(Y_a)]\right)^2    \\
&\leq& |\A|  \sum_{a, a^* \in \A} \Prob(A^*=a)\Prob(A^*=a^*) \left( \E\left[R(Y_a) | A^*=a^* \right] - \E[ R(Y_a)]\right)^2   \\
&\overset{(c)}{\leq}& \frac{|\A|}{2} \sum_{a, a^* \in \A} \Prob(A^*=a)\Prob(A^*=a^*) D\left(P(Y_{a} | A^* = a^*)   \,\, || \,\, P(Y_{a})  \right)  \\
&\overset{(d)}{=}& \frac{|\A| I( A^* ; (A, Y))}{2}
\end{eqnarray*}
where (b) follows from the Cauchy--Schwarz inequality, (c) follows from Fact \ref{fact: DME to DKL}, and (a) and (d) follow from Proposition \ref{prop: rewrite regret}. 
\end{proof}
Combining Proposition \ref{prop: worst case bound} with Proposition \ref{prop: regret bound} shows that $\E \left[{\rm Regret}(T, \pi^{\rm TS})  \right] \leq \sqrt{\frac{1}{2} |\A| H(A^*)T}$. \citet{bubeck2013prior} show $\E \left[{\rm Regret}(T, \pi^{\rm TS})  \right] \leq 14\sqrt{ |\A| T}$ and that this bound is order optimal, in the sense that for any time horizon $T$ and number of actions $|\A|$ there exists a prior distribution over $p^*$ such that $\inf_{\pi} \E \left[{\rm Regret}(T, \pi) \right] \geq \frac{1}{20} \sqrt{|\A| T}$. 

\subsection{Full Information}
Our focus in this paper is on problems with {\it partial feedback}. For such problems, what the decision maker observes depends on the actions selected, which leads to a tension between exploration and exploitation. Problems with full information arise as an extreme point of our formulation where the outcome $Y_{t,a}$ is perfectly revealed by observing $Y_{t, \tilde{a}}$ for any $\tilde{a} \neq a$; what is learned does not depend on the selected action.  The next proposition shows that under full information, the information ratio is bounded by $1/2$.


\begin{proposition}\label{prop: full information}
Suppose for each $t\in \mathbb{N}$ there is a random variable $Z_{t}: \Omega \rightarrow \mathcal{Z}$ such that for each $a\in \A$,  $Y_{t,a}= \left( a, Z_{t}  \right)$. Then for all $t\in \mathbb{N}$,  $\Gamma_{t} \leq 1/2$ almost surely.  
\end{proposition}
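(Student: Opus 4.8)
The plan is to combine Proposition \ref{prop: rewrite regret} with the defining feature of full information --- that $Y_a = (a, Z)$ for every $a$, so observing the outcome at whichever action is selected always reveals $Z$ itself. The first consequence is that the information gain is the same no matter which action is played: since for fixed $a$ the label $a$ is a constant, $(a, Z)$ is an invertible function of $Z$, and the weak data-processing inequality gives $I(A^*; Y_a) = I(A^*; Z)$ for every $a \in \A$. Substituting this into the first identity of Proposition \ref{prop: rewrite regret} and using $\sum_{a} \Prob(A=a) = 1$ collapses the denominator of the information ratio,
$$I(A^*; (A, Y_A)) = \sum_{a \in \A} \Prob(A=a)\, I(A^*; Z) = I(A^*; Z).$$

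For the numerator I would begin from the expected-regret expression in Proposition \ref{prop: rewrite regret}. Each reward $R(Y_a) = R((a, Z))$ is a function of $Z$ alone whose range has width at most $1$ by Assumption \ref{assum: bounded rewards}, so Fact \ref{fact: DME to DKL} applies with $P = P(Z \mid A^* = a)$ and $Q = P(Z)$, yielding
$$\E[R(Y_a) \mid A^* = a] - \E[R(Y_a)] \leq \sqrt{\tfrac{1}{2}\, D\!\left( P(Z \mid A^* = a)\, \| \, P(Z) \right)}$$
for each action $a$. Averaging against $\Prob(A^* = a)$ then bounds the expected regret by $\sum_{a} \Prob(A^* = a)\sqrt{\tfrac{1}{2} D(P(Z \mid A^*=a) \| P(Z))}$.

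The step that removes the $|\A|$ factor present in the worst-case bound of Proposition \ref{prop: worst case bound} is to apply Jensen's inequality to the concave square root, moving the average over $A^*$ inside the radical:
$$\E[R(Y_{A^*}) - R(Y_A)] \leq \sqrt{\tfrac{1}{2} \sum_{a} \Prob(A^* = a)\, D\!\left(P(Z \mid A^* = a) \,\|\, P(Z)\right)} = \sqrt{\tfrac{1}{2}\, I(A^*; Z)},$$
where the final equality is Fact \ref{fact: mutual information to KL}. Squaring and dividing by the denominator $I(A^*; Z)$ computed above gives $\Gamma_t \leq 1/2$. I expect the conceptual crux to be the opening observation that full information decouples the information gain from the chosen action; this is precisely what lets Jensen's inequality take the place of the Cauchy--Schwarz step used in Proposition \ref{prop: worst case bound}, so that no dependence on the number of actions survives. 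The remaining manipulations are routine applications of facts already established.
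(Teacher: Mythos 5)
Your proof is correct and follows essentially the same route as the paper's: decompose the regret via Proposition \ref{prop: rewrite regret}, apply Fact \ref{fact: DME to DKL} termwise, use Jensen's inequality, and exploit the fact that under full information the divergence $D\left(P(Y_a|A^*=a^*)\,||\,P(Y_a)\right) = D\left(P(Z|A^*=a^*)\,||\,P(Z)\right)$ does not depend on the selected action. The only cosmetic difference is how the denominator is identified: you invoke the data-processing inequality and the first identity of Proposition \ref{prop: rewrite regret} together with Fact \ref{fact: mutual information to KL}, whereas the paper passes from the diagonal sum to the double sum and uses the second identity of Proposition \ref{prop: rewrite regret} directly.
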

\begin{proof}
As before we bound the numerator of $\Gamma_{t}$ by $\sqrt{1/2}$ times the denominator:
\begin{eqnarray*}
\E \left[ R(Y_{A^*}) -R(Y_{A} )\right] &\overset{(a)}{=}&  \sum_{a\in \A} \Prob(A^*=a) \left( \E\left[R(Y_a) | A^*=a \right] - \E[ R(Y_a)]\right) \\
&\overset{(b)}{\leq}& \sum_{a\in \A} \Prob(A^*=a) \sqrt{ \frac{1}{2} D\left( P(Y_{a} | A^*=a )\, || \, P(Y_a)  \right)  }\\ 
&\overset{(c)}{\leq}& \sqrt{ \frac{1}{2} \sum_{a\in \A} \Prob(A^*=a)  D\left( P(Y_{a} | A^*=a )\, || \, P(Y_a)  \right)  }\\ 
&\overset{(d)}{\leq}& \sqrt{ \frac{1}{2} \sum_{a, a^*\in \A} \Prob(A^*=a)\Prob(A^*=a^*)  D\left( P(Y_{a} | A^*=a^* )\, || \, P(Y_a)  \right)  }\\ 
&\overset{(e)}{=}& \sqrt{\frac{ I( A^* ; (A, Y))}{2} },
\end{eqnarray*}
where again (a) and (e) follow from Proposition \ref{prop: rewrite regret}, 
(b) follows from Fact \ref{fact: DME to DKL} and (c) follows from Jensen's inequality. Equality (d) follows because $D\left(P(Y_a | A^*=a^*) \, || \, P(Y_a)  \right)=D\left(P(Z| A^*=a^*  ) \, ||\, P(Z) \right)$ does not depend on the sampled action $a$ under full information. 
\end{proof}

Combining this result with Proposition \ref{prop: regret bound} shows $\E \left[{\rm Regret}(T, \pi^{\rm TS})  \right] \leq \sqrt{\frac{1}{2} H(A^*) T}$. Further, a worst--case bound on the entropy of $A^*$ shows that $\E \left[{\rm Regret}(T, \pi^{\rm TS})  \right] \leq \sqrt{\frac{1}{2} \log (|\A|) T}$. \citet{dani2007price} show this bound is order optimal, in the sense that there exists a class of online-linear prediction problems
under which $\inf_{\pi} \E \left[{\rm Regret}(T, \pi )  \right] = \Omega\left( \sqrt{\log (|\A|) T} \right)$. The bound here improves upon this worst case bound since $H(A^*)$ can be much smaller than $\log(|\A|)$ when the prior distribution is informative.

\subsection{Linear Optimization Under Bandit Feedback}\label{subsec: linear bandit}
The stochastic linear bandit problem has been widely studied \citep[e.g][]{dani2008stochastic, rusmevichientong2010linearly, abbasi2011improved} and is one of the most important examples of a multi-armed bandit problem with ``correlated arms.''  In this setting, each action is associated with a finite dimensional feature vector, and the mean reward generated by an action is the inner product between its known feature vector and some unknown parameter vector. Because of this structure, observations from taking one action allow the decision--maker to make inferences about other actions. The next proposition bounds the information ratio for such problems. Its proof is essentially a generalization of the proof of Proposition \ref{prop: worst case bound}.
\begin{proposition}\label{prop: linear}
If $\A \subset \mathbb{R}^d$ and for each $p\in \mathcal{P}$ there exists $\theta_p \in \mathbb{R}^d$ such that for all $a \in \A$
$$\underset{y\sim p_a}{\E}\left[ R(y)  \right] = a^T \theta_p,$$
 then for all $t\in \mathbb{N}$,  $\Gamma_{t} \leq d/2$ almost surely.  
\end{proposition}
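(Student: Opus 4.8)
The plan is to generalize the matrix-free argument of Proposition \ref{prop: worst case bound} by packaging the relevant quantities into a single matrix whose rank is controlled by the linear structure. As in that proof, I would begin by invoking Proposition \ref{prop: rewrite regret} to express the numerator and denominator of $\Gamma_{t}$ in terms of the conditional mean rewards $\E\left[R(Y_a) | A^*=a^*\right]$ and the marginal mean rewards $\E[R(Y_a)]$. The crucial new ingredient is the $|\A|\times|\A|$ matrix $M$ with entries
\[
M_{a^*,a} = \sqrt{\Prob(A^*=a^*)}\sqrt{\Prob(A^*=a)}\left(\E\left[R(Y_a) | A^*=a^*\right] - \E[R(Y_a)]\right).
\]
With this definition the numerator of the information ratio is exactly ${\rm Trace}(M)$, since the diagonal entry $M_{a,a}$ equals $\Prob(A^*=a)\left(\E\left[R(Y_a)|A^*=a\right]-\E[R(Y_a)]\right)$, matching the expression for $\E\left[R(Y_{A^*})-R(Y_A)\right]$ from Proposition \ref{prop: rewrite regret}.

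For the denominator, I would apply Fact \ref{fact: DME to DKL} entrywise to obtain $D\left(P(Y_a|A^*=a^*)\,||\,P(Y_a)\right) \geq 2\left(\E\left[R(Y_a)|A^*=a^*\right]-\E[R(Y_a)]\right)^2$ and then sum against $\Prob(A^*=a^*)\Prob(A^*=a)$. By Proposition \ref{prop: rewrite regret} this yields $I\left(A^*;(A,Y)\right) \geq 2\|M\|_{\rm F}^2$. Combining this with Fact \ref{fact: trace frobenius inequality} gives
\[
\E\left[R(Y_{A^*})-R(Y_A)\right]^2 = {\rm Trace}(M)^2 \leq {\rm Rank}(M)\,\|M\|_{\rm F}^2 \leq \frac{{\rm Rank}(M)}{2}\, I\left(A^*;(A,Y)\right),
\]
so that $\Gamma_{t} \leq {\rm Rank}(M)/2$ almost surely.

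It remains to show ${\rm Rank}(M)\leq d$, and this is where the linear hypothesis enters — the step I expect to be the crux. Under the assumption $\E_{y\sim p_a}[R(y)]=a^T\theta_p$, conditioning on $A^*$ and taking expectations gives $\E\left[R(Y_a)|A^*=a^*\right] = a^T\E[\theta_{p^*}|A^*=a^*]$ and $\E[R(Y_a)] = a^T\E[\theta_{p^*}]$, both linear in the known feature vector $a$. Writing $\mu=\E[\theta_{p^*}]$ and $\mu_{a^*}=\E[\theta_{p^*}|A^*=a^*]$, each entry factors as
\[
M_{a^*,a} = \left(\sqrt{\Prob(A^*=a^*)}\,(\mu_{a^*}-\mu)\right)^T\left(\sqrt{\Prob(A^*=a)}\,a\right).
\]
Hence $M = BC$, where $B\in\mathbb{R}^{|\A|\times d}$ has rows $\sqrt{\Prob(A^*=a^*)}\,(\mu_{a^*}-\mu)^T$ and $C\in\mathbb{R}^{d\times|\A|}$ has columns $\sqrt{\Prob(A^*=a)}\,a$, so ${\rm Rank}(M)\leq d$ irrespective of $|\A|$. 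This delivers $\Gamma_{t}\leq d/2$. The only point requiring care is the interchange of expectation with the inner product against $a$; this is immediate because $a$ is a fixed, known feature vector while all randomness resides in $\theta_{p^*}$, so each conditional expectation is genuinely of the form $a^T(\text{vector depending only on }A^*)$.
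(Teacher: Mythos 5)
Your proposal is correct and follows essentially the same route as the paper's own proof: the same matrix $M$ (yours is the transpose, which is immaterial since trace, Frobenius norm, and rank are transpose-invariant), the same identification of the numerator with ${\rm Trace}(M)$ and the denominator lower bound $2\|M\|_{\rm F}^2$ via Fact \ref{fact: DME to DKL}, the same use of Fact \ref{fact: trace frobenius inequality}, and the same rank-$d$ factorization from the linear structure. Your closing remark on interchanging the conditional expectation with the inner product is the step the paper compresses into ``by the linearity of the expectation operator,'' and you justify it correctly.
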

\begin{proof}
Write $\A = \left\{a_{1},...,{a_K}\right\}$ and, to reduce notation, for the remainder of this proof let $\alpha_{i} = \Prob(A^*=a_i)$.  Define $M \in \mathbb{R}^{K \times K}$ by 
\[
M_{i,j} = \sqrt{\alpha_{i}\alpha_{j}} \left( \E[R(Y_{a_i}) | A^*=a_j] -\E[R(Y_{a_i})] \right),
\]
for all $i,j \in \{1,..,K \}$.  Then, by Proposition \ref{prop: rewrite regret},  
\[
\E\left[ R(Y_{A^*}) - R(Y_A)\right] = \sum_{i=1}^{K} \alpha_{i}\left( \E[R(Y_{a_i}) | A^*=a_i] -\E[R(Y_{a_i})] \right) =\rm{Trace}(M).
\]
Similarly, by Proposition \ref{prop: rewrite regret}, 
\begin{eqnarray*}
I(A^*; (A, Y_A))  &=& \sum_{i,j}\alpha_{i} \alpha_{j} D\left(P(Y_{a_i} | A^*=a_j) \, || \, P(Y_{a_i})  \right) \\
&\overset{(a)}{\geq} & 2 \sum_{i,j}\alpha_{i} \alpha_{j} \left( \E[R(Y_{a_{i}}) | A^*=a_j] -\E[R(Y_{a_i})]  \right)^2 \\
&=& 2\| M \|_{\rm F}^2,
\end{eqnarray*}
where inequality (a) uses Fact \ref{fact: DME to DKL}. This shows, by Fact \ref{fact: trace frobenius inequality}, that $$\frac{\E\left[ R(Y_{A^*}) - R(Y_A)\right]^2}{I(A^*; (A, Y_A)) } \leq \frac{ \rm{Trace}(M)^2}{2 \| M \|_{\rm F}^2} \leq \frac{{\rm Rank}(M)}{2} .$$ We now show ${\rm Rank}(M) \leq d$. Define 
\[
\mu = \mathbb{E}\left[ \theta_{p^*}  \right]  \hspace{30pt} \mu^j = \mathbb{E}\left[ \theta_{p^*}|  A^*=a_j \right]. 
\]
Then, by the linearity of the expectation operator, 
\[
M_{i,j} = \sqrt{\alpha_i \alpha_j}( (\mu^{j} - \mu)^T a_i)
\]
and therefore
$$M = \left[\begin{array}{c}
\sqrt{\alpha_1}\left(\mu^{1}-\mu \right)^{T}\\
\vdots\\
\vdots\\
\sqrt{\alpha_K)}\left(\mu^{k}-\mu \right)^{T}
\end{array}\right]\left[\begin{array}{cccc}
\sqrt{\alpha_{1}}a_{1} & \cdots & \cdots & \sqrt{\alpha_{K}}a_{K}\end{array}\right].
$$
This shows $M$ is the product of a $K$ by $d$ matrix and a $d$ by $K$ matrix, and hence has rank at most $d$. 
\end{proof}
This result shows that $\E \left[{\rm Regret}(T, \pi^{\rm TS})  \right] \leq \sqrt{\frac{1}{2} H(A^*)d T} \leq \sqrt{\frac{1}{2} \log(|\A|)d T}$ for linear bandit problems. Again, \citet{dani2007price} show this bound is order optimal, in the sense that for any time horizon $T$ and dimension $d$ if the actions set is  $\A = \{0,1 \}^d$, there exists a prior distribution over $p^*$ such that 
$\inf_{\pi} \E \left[{\rm Regret}(T, \pi )  \right] \geq c_{0} \sqrt{\log (|\A|) d T}$ where $c_{0}$ is a constant the is independent of $d$ and $T$. The bound here improves upon this worst case bound since $H(A^*)$ can be much smaller than $\log(|\A|)$ when the prior distribution is informative. 

\subsection{Combinatorial Action Sets and ``Semi--Bandit'' Feedback}\label{subsec: semi-bandit}
To motivate the information structure studied here, consider a simple resource allocation problem. There are $d$ possible projects, but the decision--maker can allocate resources to at most $m \leq d$ of them at a time. At time $t$, project $i \in \{1,..,d\}$ yields a random reward $\theta_{t,i}$, and the reward from selecting a subset of projects $a \in \A \subset \left\{ a' \subset \{0,1,...,d \} :  |a' | \leq m \right\}$ is $m^{-1} \sum_{i \in \A} \theta_{t,i}$. In the linear bandit formulation of this problem, upon choosing a subset of projects $a$ the agent would only observe the overall reward $m^{-1} \sum_{i \in a} \theta_{t,i}$. It may be natural instead to assume that the outcome of each selected project $(\theta_{t,i} : i \in a)$ is observed. This type of observation structure is sometimes called ``semi--bandit'' feedback \citep{audibert2013regret}.

A naive application of Proposition \ref{prop: linear} to address this problem would show $\Gamma_{t} \leq d/2$. The next proposition shows that since the entire parameter vector $(\theta_{t,i} : i \in a)$ is observed upon selecting action $a$,  we can provide an improved bound on the information ratio. The proof of the proposition is provided in the appendix. 

\begin{proposition}\label{prop: semi bandit}
Suppose $\A \subset \left\{ a \subset \{0,1,...,d \} :  |a | \leq m \right\}$, and that there are random variables $(\theta_{t,i}: t \in \mathbb{N}, i \in \{1,...,d  \})$ such that
\begin{eqnarray*}
Y_{t,a} = \left( \theta_{t,i} : i \in a \right) &\text{and}& R\left(Y_{t,a} \right) =\frac{1}{m} \sum_{i\in a }\theta_{t,i}.
\end{eqnarray*}
Assume that the random variables $\{\theta_{t,i} : i \in \{1,...,d \}\}$ are independent conditioned on $\hist$ and   $\theta_{t,i} \in [\frac{-1}{2},\frac{1}{2}]$ almost surely for each $(t,i)$. Then 
for all $t\in \mathbb{N}$,  $\Gamma_{t} \leq \frac{d}{2m^2}$ almost surely.  
\end{proposition}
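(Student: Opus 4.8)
The plan is to bound $\Gamma_t$ by controlling its numerator and denominator separately through Proposition \ref{prop: rewrite regret}, exactly as in the proof of Proposition \ref{prop: linear}, but replacing the trace/rank argument (Fact \ref{fact: trace frobenius inequality}) by two applications of Cauchy--Schwarz organized by the $d$ coordinates $\theta_i$. Write $\alpha_a=\Prob(A^*=a)$, recall $\Prob(A=a)=\alpha_a$ under Thompson sampling, set $g_{a^*,i}=\E[\theta_i\mid A^*=a^*]-\E[\theta_i]$, and let $q_i=\Prob(i\in A^*)=\sum_{a:\,i\in a}\alpha_a$. Since $R(Y_a)=\frac1m\sum_{i\in a}\theta_i$, Proposition \ref{prop: rewrite regret} gives
\[
\E[R(Y_{A^*})-R(Y_A)]=\frac1m\sum_{a}\alpha_a\sum_{i\in a}g_{a,i}=\frac1m\sum_{i=1}^d h_i,\qquad h_i:=\sum_{a:\,i\in a}\alpha_a g_{a,i},
\]
so the task reduces to proving $\left(\sum_i h_i\right)^2\le \tfrac{d}{2}\,I(A^*;(A,Y_A))$ and dividing by $m^2 I(A^*;(A,Y_A))$.

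The first main step exploits the semi--bandit feedback in the denominator. Because the $\theta_i$ are independent under $\Prob_t$, the law $P(Y_a)$ is the product of its marginals $P(\theta_i)$, $i\in a$, and the chain rule for Kullback--Leibler divergence together with Fact \ref{fact: nonnegative} yields the superadditivity bound $D\!\left(P(Y_a\mid A^*=a^*)\,\|\,P(Y_a)\right)\ge\sum_{i\in a}D\!\left(P(\theta_i\mid A^*=a^*)\,\|\,P(\theta_i)\right)$. Inserting this into the expression for $I(A^*;(A,Y_A))$ from Proposition \ref{prop: rewrite regret} and then applying Fact \ref{fact: DME to DKL} coordinate-by-coordinate (each $\theta_i$ has range at most $1$) gives
\[
I(A^*;(A,Y_A))\ \ge\ 2\sum_{a,a^*}\alpha_a\alpha_{a^*}\sum_{i\in a}g_{a^*,i}^2\ =\ 2\sum_{i=1}^d q_i\sum_{a^*}\alpha_{a^*}g_{a^*,i}^2,
\]
where the last equality only reorders the sum via $\sum_{a:\,i\in a}\alpha_a=q_i$. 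This is where the gain over the naive bound $\Gamma_t\le d/2$ comes from: observing every coordinate of the selected action replaces a single squared reward gap by a sum of $m$ coordinate-wise gaps.

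The second step bounds the numerator so that the same coordinate sums reappear. Cauchy--Schwarz over the $d$ coordinates gives $\left(\sum_i h_i\right)^2\le d\sum_i h_i^2$, and a second Cauchy--Schwarz within each coordinate, splitting $\alpha_a=\sqrt{\alpha_a}\cdot\sqrt{\alpha_a}$, gives $h_i^2\le q_i\sum_{a:\,i\in a}\alpha_a g_{a,i}^2\le q_i\sum_{a^*}\alpha_{a^*}g_{a^*,i}^2$, the final inequality only adding nonnegative terms. Combining yields $\left(\sum_i h_i\right)^2\le d\sum_i q_i\sum_{a^*}\alpha_{a^*}g_{a^*,i}^2\le \tfrac{d}{2}I(A^*;(A,Y_A))$, and hence $\Gamma_t\le d/(2m^2)$.

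The step I expect to require the most care is matching the two sides so that exactly one factor of $d$, and no stray factor of $m$, survives. The tempting route---collapsing $\sum_{i\in a}g_{a^*,i}$ into a single matrix entry and invoking Fact \ref{fact: trace frobenius inequality} as in Proposition \ref{prop: linear}---forces a lossy bound $\left(\sum_{i\in a}g_{a^*,i}\right)^2\le m\sum_{i\in a}g_{a^*,i}^2$ and only delivers $\Gamma_t\le d/(2m)$. The whole point is to keep the per-coordinate squares $g_{a^*,i}^2$ intact in the denominator, which is precisely what the superadditivity bound supplies and what lets the coordinate-indexed Cauchy--Schwarz align numerator and denominator. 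I would also verify the measure-theoretic validity of the superadditivity bound for the general (non-discrete) $\theta_i$, and confirm the range hypothesis $\sup g-\inf g\le 1$ of Fact \ref{fact: DME to DKL} with $g(\theta)=\theta$, which holds since $\theta_i\in[-\tfrac12,\tfrac12]$.
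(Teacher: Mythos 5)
Your proposal is correct and follows essentially the same route as the paper's proof: Proposition \ref{prop: rewrite regret}, then a per-coordinate lower bound on $D\left(P(Y_a\mid A^*=a^*)\,\|\,P(Y_a)\right)$ obtained from the chain rule for KL divergence together with the conditional independence of the $\theta_i$ (your ``superadditivity'' step is exactly this), coordinate-wise Pinsker via Fact \ref{fact: DME to DKL}, and a Cauchy--Schwarz over the $d$ coordinates to produce the factor $d/(2m^2)$. The only difference is bookkeeping: the paper applies Jensen's inequality to collapse the per-action gaps $\E[\theta_i\mid A^*=a^*]-\E[\theta_i]$ into quantities conditioned on the event $\{i\in A^*\}$ (its Lemmas \ref{lem: semi bandit info gain} and the one following), whereas you keep the per-action sums and compensate with a second Cauchy--Schwarz inside each coordinate ($h_i^2\le q_i\sum_{a^*}\alpha_{a^*}g_{a^*,i}^2$), which is an equivalent and equally valid way to align numerator and denominator.
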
  
In this problem, there are as many as $\binom{d}{m}$ actions, but because Thompson sampling exploits the structure relating actions to one another, its regret is only polynomial in $m$ and $d$. In particular, combining Proposition \ref{prop: semi bandit} with Proposition \ref{prop: regret bound} shows $\E \left[{\rm Regret}(T, \pi^{\rm TS})  \right] \leq \frac{1}{m}\sqrt{d H(A^*)T }$. Since $H(A^*) \leq \log |\A| = O(m \log (\frac{d}{m}))$ this also yields a bound of order $\sqrt{\frac{d}{m} \log\left(\frac{d}{m}\right)T}$. As shown by \citet{audibert2013regret}, the lower bound\footnote{In their formulation, the reward from selecting action $a$ is $\sum_{i\in a }\theta_{t,i},$ which is $m$ times larger than in our formulation. The lower bound stated in their paper is therefore of order $\sqrt{mdT}$. They don't provide a complete proof of their result, but note that it follows from standard lower bounds in the bandit literature. In the proof of Theorem 5 in that paper, they construct an example in which the decision maker plays $m$ bandit games in parallel, each with $d/m$ actions. Using that example, and the standard bandit lower bound (see Theorem  3.5 of \citet{bubeck2012regret}), the agent's regret from each component must be at least $\sqrt{\frac{d}{m} T}$, and hence her overall expected regret is lower bounded by a term of order $m\sqrt{\frac{d}{m} T}=\sqrt{mdT}$.} for this problem is of order $\sqrt{\frac{d}{m}T}$, so our bound is order optimal up to a $\sqrt{\log(\frac{d}{m})}$ factor. It's also worth pointing that, although there may be as many as $\binom{d}{m}$ actions, the action selection step of Thompson sampling is computationally efficient whenever the offline decision problem $\max_{a\in\A} \theta^T a $ can be solved efficiently.

\section{Conclusion}

This paper has provided a new analysis of Thompson sampling based on tools from information theory.  As such, our analysis inherits the simplicity and elegance enjoyed by work in 
that field.  Further, our results apply to a much broader range of information structures than 
those studied in prior work on Thompson sampling.  Our analysis leads to regret bounds that highlight the benefits of soft knowledge, quantified in terms
of the entropy of the optimal-action distribution.
Such regret bounds yield insight into how future performance depends on past observations.  This is key to assessing the benefits of exploration,
and as such, to the design of more effective schemes that trade off between exploration and exploitation.  In forthcoming work, we leverage this insight 
to produce an algorithm that outperforms Thompson sampling.

While our focus has been on providing theoretical guarantees for Thompson sampling, we believe the techniques and quantities used in the analysis may be of broader interest. Our formulation and notation may be complex, but the proofs themselves essentially follow from  combining known relations in information theory with the tower property of conditional expectation, Jensen's inequality, and the Cauchy--Schwartz inequality.  In addition, the information theoretic view taken in this paper may provide a fresh perspective on this class of problems.

\acks{Daniel Russo is supported by a Burt and Deedee McMurty Stanford Graduate Fellowship. This work
was supported, in part, by the National Science Foundation under Grant CMMI-0968707. 
The authors would like to thank the anonymous reviewers for their helpful comments. }



\appendix

\section{Proof of Basic Facts}
\subsection{Proof of Fact \ref{fact: DME to DKL}}
This result is a consequence of {\it Pinsker's inequality}, (see Lemma 5.2.8 of \citet{gray2011entropy}) which states that 
\begin{equation}
\sqrt{\frac{1}{2}D(P  ||Q)}\geq \| P-Q \|_{\rm TV}
\end{equation}
where $\| P-Q \|_{\rm TV}$ is the total variation distance between $P$ and $Q$. When $\Omega$ is countable, 
$\| P-Q \|_{\rm TV} =\frac{1}{2} \sum_{\omega} |P(\omega)-Q(\omega)|$. More generally, if $P$ and $Q$ are both absolutely continuous with respect to some base measure $\mu$, then $\| P-Q \|_{\rm TV} =\frac{1}{2} \intop_{\Omega} |\frac{dP}{d\mu}-\frac{dQ}{d\mu}|d\mu$, where $\frac{dP}{d\mu}$ and $\frac{dQ}{d\mu}$ are the Radon--Nikodym derivatives of $P$ and $Q$ with respect to $\mu$. 
We now prove Fact \ref{fact: DME to DKL}. 
\begin{proof}
Choose a base measure $\mu$ so that $P$ and $Q$ are absolutely continuous with respect to $\mu$. This is always possible: since $P$ is absolutely continuous with respect to $Q$ by hypothesis, one could always choose this base measure to be $Q$. Let $f(\omega)=g(X(\omega)) - \inf_{\omega} g(X(\omega)) - 1/2$ so that $f: \Omega \rightarrow [-1/2, 1/2]$ and $f$ and $g(X)$ differ only by a constant. Then, 
\begin{eqnarray*}\sqrt{\frac{1}{2} D(P  ||Q)} \geq \frac{1}{2}\intop \left|\frac{dP}{d\mu}-\frac{dQ}{d\mu} \right| d\mu  &\geq &  \frac{1}{2} \intop \left|2 \left( \frac{dP}{d\mu}-\frac{dQ}{d\mu}\right)f  \right| d\mu  \\
&\geq&  \intop \left( \frac{dP}{d\mu}-\frac{dQ}{d\mu}\right)f d\mu \\
&=&  \intop f dP - \intop f dQ \\
&=& \E_{P}[g(X)] - \E_{Q}[g(X)],
\end{eqnarray*}
where the first inequality is Pinsker's inequality. 
\end{proof}

\subsection{Proof of Fact \ref{fact: trace frobenius inequality}}
\begin{proof}
Fix a rank $r$ matrix $M \in \mathbb{R}^{k\times k}$ with singular values $\sigma_{1}\geq ...\geq \sigma_{r}$.  By the Cauchy Shwartz inequality, 
\begin{equation}\label{eq: nuclear norm by Frobenius norm}
\| M  \|_* \,\, \overset{{\rm Def}}{=} \,\, \sum_{i=1}^{r} \sigma_{i} \leq \sqrt{r} \sqrt{\sum_{i=1}^{r} \sigma_{i}^2} \,\, \overset{{\rm Def}}{=} \,\, \sqrt{r} \| M \|_{F}.\end{equation} 
Now, we show
\begin{eqnarray*}
{\rm Trace}(M) = {\rm Trace}\left(\frac{1}{2}M+\frac{1}{2}M^T\right) \overset{(a)}{\leq} \left\| \frac{1}{2}M+\frac{1}{2}M^T \right\|_{*} & \overset{(b)}{\leq}& \frac{1}{2}\left\| M \right\|_{*}+\frac{1}{2}\left\| M^T \right\|_{*} \\
&\overset{(c)}{=}& \| M \|_{*} \overset{(d)}{\leq} \sqrt{r} \| M \|_F.
\end{eqnarray*}
Here (b) follows from the triangle inequality and the fact that norms are homogeneous of degree one. Inequality (c) uses that the singular values of $M$ and $M^T$ are the same, and inequality (d) follows from equation \eqref{eq: nuclear norm by Frobenius norm}. 

To justify inequality (a), we show that for any symmetric matrix $W$, $ {\rm Trace}(W) \leq \| W \|_{*}$. To see this, let $W$ be a rank $r$ matrix and let $\tilde{\sigma}_{1}\geq...\geq \tilde{\sigma}_{r}$ denote its singular values. Since $W$ is symmetric, it has  $r$ nonzero real valued eigenvalues $\lambda_{1},..,\lambda_{r}$. If these are sorted so that $|\lambda_{1}| \geq ...\geq |\lambda_{r}|$  then $\tilde{\sigma}_{i}=|\lambda_{i}|$ for each $i\in \{1,..,r \}. $\footnote{This fact is stated, for example, in Appendix A.5 of \citet{boyd2004convex}.} This shows  ${\rm Trace}(M) = \sum_{i=1}^{r} \lambda_{i} \leq  \sum_{i=1}^{r}  \tilde{\sigma_{i}}  = \|  M\|_{*}$. 
\end{proof}

\section{Proof of Proposition \ref{prop: semi bandit}}
The proof relies on the following lemma, which lower bounds the information gain due to selecting an action $a$. The proof is provided below, and relies on the chain--rule of Kullback--Leibler divergence. 
\begin{lem}\label{lem: semi bandit info gain}
Under the conditions of Proposition \ref{prop: semi bandit}, for any $a\in \A$, 
\begin{equation*}\label{eq: semi bandit info gain}
I(A^*; Y_a)  \geq 2\sum_{i\in a} \Prob(i \in A^*)  \left(  \E \left[    \theta_{i}    | i \in A^*  \right]-\E \left[    \theta_{i}\right] \right)^2
\end{equation*}
\end{lem}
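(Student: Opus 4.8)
The plan is to prove the lemma in two movements: first reduce the information gain from observing the whole vector $Y_a=(\theta_i:i\in a)$ to a sum of per-coordinate information gains $I(A^*;\theta_i)$, and then lower bound each $I(A^*;\theta_i)$ by the desired squared mean-difference term.

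For the first movement, I would start from the KL-divergence form of mutual information (Fact \ref{fact: mutual information to KL}), writing $I(A^*;Y_a)=\sum_{a^*}\Prob(A^*=a^*)\,D\big(P(Y_a\mid A^*=a^*)\,\|\,P(Y_a)\big)$. The crucial structural observation is that, conditioned on $\hist$, the coordinates $\{\theta_i:i\in a\}$ are independent, so the unconditional posterior predictive $P(Y_a)$ is the product measure $\prod_{i\in a}P(\theta_i)$. Tensorizing the divergence against this product measure --- equivalently, invoking the chain rule of KL divergence and discarding the nonnegative total-correlation term of $P(Y_a\mid A^*=a^*)$ --- gives $D\big(P(Y_a\mid A^*=a^*)\,\|\,P(Y_a)\big)\ge \sum_{i\in a}D\big(P(\theta_i\mid A^*=a^*)\,\|\,P(\theta_i)\big)$. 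Substituting back and exchanging the order of summation, each inner sum $\sum_{a^*}\Prob(A^*=a^*)D(P(\theta_i\mid A^*=a^*)\|P(\theta_i))$ is, again by Fact \ref{fact: mutual information to KL}, exactly $I(A^*;\theta_i)$, yielding $I(A^*;Y_a)\ge\sum_{i\in a}I(A^*;\theta_i)$.

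For the second movement, I fix $i\in a$ and introduce the indicator $B_i=\mathbf{1}\{i\in A^*\}$, which is a deterministic function of $A^*$. By symmetry of mutual information together with the weak data processing inequality, $I(A^*;\theta_i)\ge I(B_i;\theta_i)$. Expanding $I(B_i;\theta_i)$ through Fact \ref{fact: mutual information to KL} over the two values of $B_i$ and discarding the nonnegative $\{i\notin A^*\}$ term leaves $I(B_i;\theta_i)\ge \Prob(i\in A^*)\,D\big(P(\theta_i\mid i\in A^*)\,\|\,P(\theta_i)\big)$. Since $\theta_i\in[-1/2,1/2]$, the identity map has range at most $1$, so Fact \ref{fact: DME to DKL} (applied to $g$ and to $-g$ to handle either sign of the mean difference) gives $D\big(P(\theta_i\mid i\in A^*)\,\|\,P(\theta_i)\big)\ge 2\big(\E[\theta_i\mid i\in A^*]-\E[\theta_i]\big)^2$. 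Combining the two movements and summing over $i\in a$ produces the stated bound.

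The main obstacle I anticipate is justifying the tensorization/super-additivity step cleanly: one must be careful that it is the reference measure $P(Y_a)$, not the conditional measure $P(Y_a\mid A^*=a^*)$, that factorizes, and that conditioning on $\{A^*=a^*\}$ can only create positive correlation among the coordinates. The cleanest route is the chain-rule identity $D(P\,\|\,\prod_i Q_i)=D(P\,\|\,\prod_i P_i)+\sum_i D(P_i\,\|\,Q_i)$, where the $P_i$ are the marginals of $P$; the first term is the nonnegative multi-information of $P$ and is simply dropped. Everything else is a direct application of the information-theoretic facts already established.
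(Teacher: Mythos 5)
Your proof is correct, and although it rests on the same three pillars as the paper's argument --- the KL form of mutual information (Fact \ref{fact: mutual information to KL}), a per-coordinate decomposition enabled by the conditional independence of $(\theta_i)_{i\in a}$, and the Pinsker-type inequality (Fact \ref{fact: DME to DKL}) --- it executes them in a genuinely different order. The paper applies the chain rule of KL divergence while \emph{retaining} the conditioning on earlier coordinates $\theta_{j<i}$, invokes Fact \ref{fact: DME to DKL} immediately on those conditional laws, and then spends the rest of the proof applying Jensen's inequality and the tower property to \emph{scalar} squared mean differences: once to integrate out $\theta_{j<i}$, and once more to pass from conditioning on $\{A^*=a^*\}$ with $a^*\ni i$ to conditioning on the event $\{i\in A^*\}$. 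You perform the corresponding convexity steps at the level of \emph{distributions}, before Pinsker ever appears: dropping the total-correlation term gives $I(A^*;Y_a)\geq\sum_{i\in a}I(A^*;\theta_i)$ (your tensorization identity is an immediate corollary of the same chain rule the paper uses), and the data processing inequality coarsens $A^*$ to the indicator $\mathbf{1}\{i\in A^*\}$, so that Fact \ref{fact: DME to DKL} is needed exactly once, for the pair $P(\theta_i\mid i\in A^*)$ and $P(\theta_i)$. The two orderings land on the same bound because convexity of the square function and convexity of KL divergence in its first argument do the same work, and you correctly flag the single structural point on which everything hangs --- that it is the reference measure $P(Y_a)$ which factorizes, not $P(Y_a\mid A^*=a^*)$; this is precisely what the paper exploits when it identifies $P(\theta_i\mid\theta_{j<i})$ with $P(\theta_i)$. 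What your organization buys is modularity: the intermediate quantities $I(A^*;\theta_i)$ and $I(\mathbf{1}\{i\in A^*\};\theta_i)$ are individually meaningful, and the nested conditional-expectation bookkeeping disappears. The one blemish is the data-processing step: the paper's weak DPI is stated for $Z=f(Y)$ with the \emph{first} argument finitely supported, so after your appeal to symmetry the finitely supported variable sits in the wrong slot ($\theta_i$ is generally not finitely supported); the general form of the DPI is of course true and standard, but it is not literally among the paper's stated facts, whereas the paper's alternative route through iterated Jensen steps avoids needing it.
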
 
\begin{proof} 
\newcommand{\thetaja}{{\theta_{j<i}}}
In the proof of this lemma, for any $i \in a$ we let $\thetaja = (\theta_{j}: j<i, j\in a)$. Since $a \in \A$ is fixed throughout this proof, we do not display the dependence of $\thetaja$ on $a$. Recall that by Fact \ref{fact: mutual information to KL}, 
\[
I(A^*; Y_a) = \sum_{a^* \in \A} \Prob(A^*=a^*)  D\left( P(Y_a| A^*=a^*)  \, || \,  P(Y_a) \right).
\]
where here,
\begin{eqnarray*}
D\left( P(Y_a| A^*=a^*)  \, || \,  P(Y_a) \right) &=& D\left( P\left( ( \theta_{i}: i \in a)  | A^*=a^* \right) \,\, || \,\, P\left( ( \theta_{i}: i \in a)  \right) \right)\\
&\overset{(a)}{=}& \sum_{i\in a} \E\left[ D\left( P \left(    \theta_{i} | A^* =a^*, \thetaja \right) \,\,||\,\, P\left(    \theta_{i}  |  \thetaja \right)
\right)  \bigg\vert A^*=a^*   \right] \\
&\overset{(b)}{\geq}&  2 \sum_{i\in a} \E  \left[  \left( \E \left[    \theta_{i}    |  A^* =a^*, \thetaja \right]-\E \left[    \theta_{i}    |  \thetaja \right] \right)^2 \bigg| A^*=a^*  \right]\\
&\overset{(c)}{=}& 2\sum_{i\in a} \E  \left[  \left( \E \left[    \theta_{i}    | A^* =a^*, \thetaja \right]-\E \left[    \theta_{i} \right] \right)^2 \bigg| A^*=a^*  \right]\\
&\overset{(d)}{\geq}& 2\sum_{i\in a} \left( \E \left[    \theta_{i} | A^* =a^* \right]-\E \left[    \theta_{i} \right] \right)^2.
\end{eqnarray*}
Equality (a) follows from the chain rule of Kullback--Leibler divergence, (b) follows from Fact \ref{fact: DME to DKL}, (c) follows from the assumption that $(\theta_{i} : 1\leq i \leq d)$ are independent conditioned on any history of observations, and (d) follows from Jensen's inequality and the tower property of conditional expectation. Now, we can show
\begin{eqnarray*} 
 I(A^*; Y_a) 
&\geq&  2\sum_{i\in a} \sum_{a^* \in \A} \Prob(A^*=a^*)  \left( \E \left[    \theta_{i} | A^* =a^* \right]-\E \left[    \theta_{i} \right] \right)^2 \\
&=& 2\sum_{i\in a} \Prob(i \in A^*) \sum_{a^* \in \A} \frac{ \Prob(A^*=a^*) }{\Prob(i \in A^*)}  \left( \E \left[    \theta_{i}    |  A^* =a^* \right]-\E \left[    \theta_{i} \right] \right)^2 \\
&\geq& 2\sum_{i\in a} \Prob(i \in A^*) \sum_{a^*: i \in a^*} \frac{\Prob(A^*=a^*)}{\Prob(i \in A^*)}  \left( \E \left[    \theta_{i}    | A^* =a^* \right]-\E \left[    \theta_{i}  \right] \right)^2\\
&\geq& 2\sum_{i\in a} \Prob(i \in A^*) \sum_{a^*: i \in a^*} \Prob(A^*=a^*| i \in A^*)  \left( \E \left[    \theta_{i}    | A^* =a^* \right]-\E \left[    \theta_{i}  \right] \right)^2\\
&\overset{(e)}{\geq}& 2\sum_{i\in a} \Prob(i \in A^*)  \left(\sum_{a^*: i \in a^*} \Prob(A^*=a^*| i \in A^*) \left( \E \left[    \theta_{i}    | A^* =a^* \right]-\E \left[    \theta_{i} \right]\right) \right)^2\\
& \overset{(f)}{=}& 2\sum_{i\in a} \Prob(i \in A^*)  \left(  \E \left[    \theta_{i}    | i \in A^*  \right]-\E \left[    \theta_{i} \right] \right)^2,
\end{eqnarray*}
where (e) and (f) follow from Jensen's inequality and the tower property of conditional expectation, respectively. 
\end{proof}
\begin{lem}
Under the conditions of Proposition \ref{prop: semi bandit}, 
\[ 
I(A^*; (A, Y_A)) \geq 2\sum_{i=1}^{d} \Prob(i \in A^*)^2  \left(  \E \left[    \theta_{i} | i \in A^*  \right]-\E \left[    \theta_{i}  \right] \right)^2.
\]
\end{lem}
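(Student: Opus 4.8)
The plan is to combine the decomposition of mutual information from Proposition \ref{prop: rewrite regret} with the per-action lower bound just established in Lemma \ref{lem: semi bandit info gain}, and then interchange the order of summation. First I would invoke Proposition \ref{prop: rewrite regret} together with the defining property of Thompson sampling, $\Prob(A=a)=\Prob(A^*=a)$, to write
\[
I(A^*; (A, Y_A)) = \sum_{a \in \A} \Prob(A^* = a)\, I(A^*; Y_a).
\]

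Next I would apply Lemma \ref{lem: semi bandit info gain} to lower bound each term $I(A^*; Y_a)$, obtaining
\[
I(A^*; (A, Y_A)) \geq 2 \sum_{a \in \A} \Prob(A^* = a) \sum_{i \in a} \Prob(i \in A^*) \left( \E[\theta_i \mid i \in A^*] - \E[\theta_i] \right)^2.
\]
The only nontrivial step is to interchange the two summations. Abbreviating $c_i := \left( \E[\theta_i \mid i \in A^*] - \E[\theta_i] \right)^2$, the double sum becomes $2 \sum_{i=1}^d \Prob(i \in A^*)\, c_i \sum_{a : i \in a} \Prob(A^* = a)$, where the inner sum now ranges over all feasible actions $a \in \A$ containing project $i$. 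That inner sum collapses to the marginal probability $\sum_{a : i \in a} \Prob(A^* = a) = \Prob(i \in A^*)$, since summing the probabilities of all optimal sets containing $i$ is exactly the probability that $i$ belongs to the optimal set. Substituting this back produces the claimed bound
\[
I(A^*; (A, Y_A)) \geq 2 \sum_{i=1}^d \Prob(i \in A^*)^2 \left( \E[\theta_i \mid i \in A^*] - \E[\theta_i] \right)^2.
\]

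The main obstacle here is bookkeeping rather than any deep argument: one must be careful that after the summation interchange the index $a$ runs only over feasible actions in $\A$ containing $i$, and recognize this collection as precisely the event $\{i \in A^*\}$ so that its total probability is the marginal $\Prob(i \in A^*)$. Once that identification is made, the result is immediate. It is worth noting that the extra factor of $\Prob(i \in A^*)$ (turning $\Prob(i \in A^*)$ into $\Prob(i \in A^*)^2$) arises precisely because Thompson sampling weights each action's information gain by $\Prob(A=a)=\Prob(A^*=a)$, which is what couples the action-selection distribution to the optimal-action distribution in the semi-bandit setting.
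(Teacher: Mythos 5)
Your proof is correct and follows essentially the same route as the paper's: decompose $I(A^*;(A,Y_A))$ via Proposition \ref{prop: rewrite regret} and the Thompson sampling identity $\Prob(A=a)=\Prob(A^*=a)$, lower bound each $I(A^*;Y_a)$ by Lemma \ref{lem: semi bandit info gain}, and then interchange the sums using $\sum_{a:\, i\in a}\Prob(A^*=a)=\Prob(i\in A^*)$ to obtain the squared marginal probabilities. No gaps to report.
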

\begin{proof}
By Lemma \ref{lem: semi bandit info gain} and the tower property of conditional expectation, 
\begin{eqnarray*}
I(A^*; (A; Y_A)) &=& \Prob(A^*=a)I(A^* ; Y_a) \\
&\geq& 2\sum_{a\in \A}\Prob(A^*=a) \left[ \sum_{i\in a} \Prob(i \in A^*)  \left(  \E \left[    \theta_{i}    | i \in A^*  \right]-\E \left[    \theta_{i} \right] \right)^2 \right] \\
&=& 2\sum_{i=1}^{d} \sum_{a: i \in a} \Prob(A^* =a)  \left( \Prob(i \in A^*)  \left(  \E \left[    \theta_{i}  |  i \in A^*  \right]-\E \left[    \theta_{i} \right] \right)^2  \right)  \\
&=& 2\sum_{i=1}^{d} \Prob(i \in A^*)^2  \left(  \E \left[    \theta_{i} | i \in A^*  \right]-\E \left[    \theta_{i}  \right] \right)^2.
\end{eqnarray*}
\end{proof}
Now, we complete the proof of Proposition \ref{prop: semi bandit}. 
\begin{proof}
The proof establishes that the numerator of the information ratio is less than $d/2m^2$ times its denominator: 
\begin{eqnarray*}
m\E[R(Y_{A^*}) -R(Y_{A})] &=&\E \sum_{i\in A^*}\theta_i- \E \sum_{i\in A}\theta_i  \\
&=& \sum_{i=1}^{d} \Prob(i\in A^*) \E[\theta_i | i\in A^*] -  \sum_{i=1}^{d} \Prob(i\in A) \E[\theta_i | i\in A] \\
&=&\sum_{i=1}^{d} \Prob(i\in A^*) \left( \E[\theta_i | i\in A^*] - \E[\theta_i] \right)\\
&\leq& \sqrt{d} \sqrt{\sum_{i=1}^{d} \Prob(i\in A^*)^2 \left( \E[\theta_i | i\in A^*] - \E[\theta_i] \right)^2}\\
&\leq& \sqrt{\frac{d I(A^*; (A, Y_A))}{2}}.
\end{eqnarray*}
\end{proof}

\section{Proof of Fact \ref{fact: mutual information to KL}}
We could not find a reference that proves Fact \ref{fact: mutual information to KL} in a general setting, and will therefore provide a proof here. 

Consider random variables $X: \Omega \rightarrow \mathcal{X}$ and $Y: \Omega \rightarrow \Y$ where $\mathcal{X}$ is assumed to be a finite set but $Y$ is a general random variable. We show 
$$ I(X; Y) = \sum_{x\in \mathcal{X}} \Prob \left(X=x \right) D\left( \Prob \left( Y \in \cdot | X=x \right) \, || \,  \Prob \left( Y \in \cdot \right) \right).$$
When $Y$ has finite support this result follows easily by using that $\Prob(X = x, Y=y) = \Prob(X=x)\Prob(Y=y| X=x)$. For an extension to general random variables $Y$ we follow chapter 5 of \citet{gray2011entropy}. 

\newcommand{\Q}{\mathcal{Q}}
\newcommand{\YQ}{\left[Y\right]_{\mathcal{Q}}}
The extension follows by considering quantized versions of the random variable $Y$. For a finite partition $\Q = \{Q_{i} \subset \Y \}_{i \in I}$, we denote by  $Y_{\Q}$ the quantization of $Y$ defined so that  $Y_{\Q}(\omega) = i$ if and only if  $Y(\omega) \in Q_i$. 
As shown in \citet{gray2011entropy}, for two probability measures $P_1$ and $P_2$ on $\left(\Omega, \mathcal{F}  \right)$, 
\begin{equation}
D\left( P_{1}( Y  \in \cdot  ) ||P_{2}( Y \in \cdot  )   \right) = \sup_{\Q} D\left( P_{1}( Y_{\Q} \in \cdot  ) ||P_{2}( Y_{\Q} \in \cdot  )\right),
\end{equation}
where the supremum is taken over all finite partitions of $\Y$, and whenever $\overline{\Q}$ is a refinement of $\Q$,  
\begin{equation}
 D\left( P_{1}( Y_{\overline{\Q}} \in \cdot  ) ||P_{2}( Y_{\overline{\Q}} \in \cdot  )\right) \geq  D\left( P_{1}( Y_{\Q} \in \cdot  ) ||P_{2}( Y_{\Q} \in \cdot  )\right).
\end{equation}
Since $I(X; Y) = D\left(\Prob(X \in \cdot, Y\in \cdot) \, || \, \Prob( X\in \cdot) \Prob(Y\in \cdot)   \right)$ these properties also apply to $I(X; Y_{\Q})$. 
Now, for each $x \in \mathcal{X}$ we can introduce a sequence of successively refined partitions $\left( \Q_{n}^{x} \right)_{n\in \mathbb{N}}$ so that 
\begin{eqnarray*}
  D\left( \Prob \left(  Y \in \cdot | X=x \right) \, || \,  \Prob \left( Y \in \cdot \right) \right)&=&  \lim_{n \rightarrow \infty}  D\left( \Prob \left(  Y_{\Q_n^x} \in \cdot | X=x \right) \, || \,  \Prob \left( Y_{\Q_n^x} \in \cdot \right) \right) \\
&=&\sup_{\Q} D\left( \Prob \left(  Y_{\Q} \in \cdot | X=x \right) \, || \,  \Prob \left( Y_{\Q} \in \cdot \right) \right).
\end{eqnarray*}
Let the finite partition $\Q_n$ denote the refinement of the partitions $\{\Q_n^x \}_{x\in \mathcal{X}}$. That is,  for each $x$, every set in $Q_n^x$ can be written as the union of sets in $\Q_n$.  Then,  
\begin{eqnarray*}
I(X;  Y) &\geq& \underset{n\rightarrow \infty}{\lim} I(X; Y_{\Q_n}) \\
&=& \underset{n\rightarrow \infty}{\lim}\sum_{x\in \mathcal{X}} \Prob \left(X=x \right) D\left( \Prob \left(Y_{\Q_n} \in \cdot | X=x \right) \, || \,  \Prob \left( Y_{\Q_n} \in \cdot \right) \right)  \\
&\geq& \underset{n\rightarrow \infty}{\lim}\sum_{x\in \mathcal{X}} \Prob \left(X=x \right) D\left( \Prob \left(Y_{\Q_n^x} \in \cdot | X=x \right) \, || \,  \Prob \left( Y_{\Q_n^x} \in \cdot \right) \right)  \\
&=& \sum_{x\in \mathcal{X}} \Prob \left(X=x \right) D\left( \Prob \left(Y  \in \cdot | X=x \right) \, || \,  \Prob \left( Y \in \cdot \right) \right)  \\
&=& \sum_{x\in \mathcal{X}} \Prob \left(X=x \right) \sup_{\Q^x}  D \left( \Prob \left(Y_{\Q^x} \in \cdot | X=x \right) \, || \,  \Prob \left( Y_{\Q^x} \in \cdot \right) \right) \\
&\geq& \sup_{\Q} \sum_{x\in \mathcal{X}} \Prob \left(X=x \right)   D\left( \Prob \left(Y_{\Q} \in \cdot | X=x \right) \, || \,  \Prob \left( Y_{\Q} \in \cdot \right) \right) \\
&=& \sup_{\Q} I(X; Y_{\Q}) = I(X; Y),
\end{eqnarray*}
hence the inequalities above are equalities and our claim is established.

\section{Technical Extensions}

\subsection{Infinite Action Spaces}\label{sec: infinite action spaces}
In this section, we discuss how to extend our results to problems where the action space is infinite. For concreteness, we will focus on linear optimization problems. Specifically, throughout this section our focus is on problems  where $\A \subset \mathbb{R}^d$ and for each $p \in \mathcal{P}$ there is some $\theta_{p} \in \Theta \subset \mathbb{R}^d$ such that for all $a \in \A$
$$ \underset{y\sim p_{a}}{\E} \left[ R(y) \right] = a^T \theta_{p}.$$
We assume further that there is a fixed constant $c \in \mathbb{R}$ such that $\sup_{a\in\A} \| a \|_2 \leq c$ and $\sup_{\theta \in \Theta} \| \theta \|_2 \leq c$. Because $\A$ is compact, it can be extremely well approximated by a finite set. In particular, we can choose  a finite cover $\A_{\epsilon} \subset \A$ with $\log|A_{\epsilon}|= O(d\log(dc/\epsilon)$ such that for every $a \in \A$, $\min_{\tilde{a}\in \A_{\epsilon}} \| a- \tilde{a}\|_2 \leq \epsilon$. By the Cauchy-Schwartz inequality, this implies that $| \left(a-\tilde{a}\right)^T \theta_P | \leq c\epsilon$  for every $\theta_P$. 

We introduce a quantization $A^*_Q$ of the random variable $A^*$. $A_Q^*$ is supported only on the set $\A_{\epsilon}$, but satisfies $\| A^*(\omega)-A^*_Q(\omega) \|_2 \leq \epsilon$ for each $\omega \in \Omega$. 

Consider a variant of Thompson sampling that randomizes according to the distribution of $A_Q^*$. That is, for each $a\in \A_{\epsilon}$, $\Prob\left(A_{t}=a | \hist  \right)=\Prob\left(A^*_Q=a | \hist  \right)$. Then, our analysis shows, 
\begin{eqnarray*}
\E \sum_{t=1}^{T}\left[ R( Y_{t, A^*}) -R( Y_{t, A_t})  \right] &=& \E \sum_{t=1}^{T}\left[ R( Y_{t,A^*})-R( Y_{t,A_Q^*})  \right]+\E \sum_{t=1}^{T}\left[R( Y_{t, A_Q^*}) -R( Y_{t, A_{t}})  \right] \\
&\leq& \epsilon T + \sqrt{dT H\left(A_Q^* \right)}.
\end{eqnarray*}

This new bound depends on the time horizon $T$, the dimension $d$ of the linear model, the entropy of the prior distribution of the quantized random variable $A^*_Q$ and the discretization error $\epsilon T $. Since $H(A_Q^*) \leq \log\left(|\A_{\epsilon}| \right)$, by choosing $\epsilon=(cT)^{-1}$ and choosing a finite cover with $\log\left(|\A_{\epsilon}|\right) = O(d\log(dcT))$ we attain a regret bound of order 
$d\sqrt{T \log(dcT)}$. Note that for $d\leq T$, this bound is of order $d\sqrt{T \log(cT)}$, but for $d>T \geq 1$, we have a trivial regret bound of $T< \sqrt{dT}$. 

Here, for simplicity, we have considered a variant of Thompson sampling that randomizes according to the posterior distribution of the quantized random variable $A^*_Q$. However, with a somewhat more careful analysis, one can provide a similar result for an algorithm that randomizes according to the posterior distribution of $A^*$.

\subsection{Unbounded Noise} 
In this section we relax Assumption \ref{assum: bounded rewards}, which requires that reward distributions are uniformly bounded. This assumption is required for Fact \ref{fact: DME to DKL} to hold, but otherwise was never used in our analysis. Here, we show that an analogue to Fact \ref{fact: DME to DKL} holds in the more general setting where reward distributions are {\it sub--Gaussian}. This yields more general regret bounds. 

A random variable is sub--Gaussian if its moment generating function is dominated by that of a Gaussian random variable. Gaussian random variables are sub--Gaussian, as are real-valued random varaibles with bounded support. 
\begin{defn}
Fix a deterministic constant $\sigma \in \mathbb{R}$. A real--valued random variable $X$ is $\sigma$ sub--Gaussian if for all $\lambda \in \mathbb{R}$ 
$$\E \left[\exp\{ \lambda X \} \right]\leq \exp\{ \lambda^2 \sigma^2 /2\}.$$
More generally $X$ is $\sigma$ sub--Gaussian conditional on a sigma-algebra $\mathcal{G} \subset \mathcal{F}$ if for all $\lambda \in \mathbb{R}$, 
$$\E \left[\exp\{ \lambda X \} \vert \mathcal{G} \right]\leq \exp\{ \lambda^2 \sigma^2 /2\}$$ 
almost surely. 
\end{defn}
The next lemma extends Fact \ref{fact: DME to DKL} to sub-Gaussian random variables. 
\begin{lem}\label{lem: DME to DKL under sub--Gaussian noise}
Suppose there is a $\hist$ measurable random variable $\sigma$ such that for each $a\in\A$, $R(Y_{t,a}))$ is $\sigma$ sub-Gaussian conditional on $\hist$. Then for every $a, a^* \in \A$ 
\[
\E_{t}[R(Y_{t,a})| A^*=a^*] -\E_{t}[R(Y_{t,a})] \leq \sigma \sqrt{2D \left(P_{t}(Y_{t,a}|A*=a^*) \, || \, P_{t}(Y_{t,a})\right)}, 
\]
almost surely. 
\end{lem}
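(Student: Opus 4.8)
The plan is to extend the derivation of Fact~\ref{fact: DME to DKL} by replacing Pinsker's inequality with a variational (or ``Donsker--Varadhan-type'') bound on mean differences that is valid for sub-Gaussian random variables. Recall that the proof of Fact~\ref{fact: DME to DKL} ran through Pinsker's inequality and the total-variation distance, which crucially exploited the boundedness of $g(X)$ to control $\int (\frac{dP}{d\mu}-\frac{dQ}{d\mu}) f \, d\mu$ by $\|P-Q\|_{\rm TV}$. Under merely sub-Gaussian tails this route fails, since unbounded integrands cannot be dominated by total variation. Instead I would invoke the standard duality between Kullback--Leibler divergence and log-moment generating functions.

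The key tool is the Donsker--Varadhan variational formula, which states that for any random variable $W$ and any $\lambda \in \mathbb{R}$,
\[
\E_{P}[\lambda W] - \log \E_{Q}\!\left[ e^{\lambda W} \right] \leq D(P \,||\, Q),
\]
whenever $P$ is absolutely continuous with respect to $Q$. Applying this with $W = R(Y_{t,a})$, $P = P_t(Y_{t,a}\mid A^*=a^*)$, and $Q = P_t(Y_{t,a})$, I would rearrange to obtain
\[
\lambda \left( \E_{t}[R(Y_{t,a}) \mid A^*=a^*] - \E_{t}[R(Y_{t,a})] \right) \leq D(P \,||\, Q) + \log \E_{Q}\!\left[ e^{\lambda (R(Y_{t,a}) - \E_{t}[R(Y_{t,a})])} \right].
\]
The sub-Gaussian hypothesis, applied conditionally on $\hist$, bounds the last log-moment-generating-function term by $\lambda^2 \sigma^2 / 2$. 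This yields, writing $\Delta := \E_{t}[R(Y_{t,a}) \mid A^*=a^*] - \E_{t}[R(Y_{t,a})]$,
\[
\lambda \Delta \leq D(P \,||\, Q) + \frac{\lambda^2 \sigma^2}{2}
\qquad \text{for all } \lambda > 0.
\]

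Finally I would optimize the free parameter $\lambda$. Dividing by $\lambda$ and minimizing the right-hand side over $\lambda > 0$ gives the optimal choice $\lambda^* = \sqrt{2 D(P\,||\,Q)}/\sigma$, producing $\Delta \leq \sigma \sqrt{2 D(P \,||\, Q)}$, which is exactly the claimed inequality. One subtlety to address carefully, since $\sigma$ is a $\hist$-measurable random variable rather than a constant, is that the sub-Gaussian bound and the whole argument hold conditionally on $\hist$ almost surely; the optimization over $\lambda$ must then be understood as selecting a (possibly $\hist$-measurable) $\lambda^*$ pointwise, which is legitimate because the inequality holds simultaneously for every deterministic $\lambda$. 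The main obstacle, as in the proof of Fact~\ref{fact: mutual information to KL}, is rigorously justifying the variational formula and the interchange of conditioning with the moment-generating-function bound when $Y_{t,a}$ is a general (not necessarily discrete) random variable; I would handle this by working under the regular conditional distribution $P_t$ and invoking the duality formula in the general form in which it holds for arbitrary probability measures with $P \ll Q$.
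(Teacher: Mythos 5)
Your proposal is correct and follows essentially the same route as the paper: the paper's proof also invokes the variational (Donsker--Varadhan) characterization of KL divergence (its Fact \ref{fact: variational form}), applies it to $\lambda X$ with $X$ the centered reward $R(Y_{t,a})-\E_t[R(Y_{t,a})]$, bounds the log-moment-generating function by $\lambda^2\sigma^2/2$ via the conditional sub-Gaussian hypothesis, and optimizes over $\lambda$. Your rearrangement (uncentered $W$ with the centering absorbed into the MGF term, then minimizing over $\lambda>0$) is an equivalent phrasing of the paper's maximization of $\lambda\Delta - \lambda^2\sigma^2/2$.
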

Using this Lemma in place of Fact \ref{fact: DME to DKL} in our analysis leads to the following corollary. 
\begin{cor}\label{cor: information ratio under sub--Gaussian noise}
Fix a deterministic constant $\sigma \in \mathbb{R}$. Suppose that conditioned on $\hist$,  $R(Y_{t,a})-\E\left[R(Y_{t,a} \vert \hist  \right] $ is $\sigma$ sub--Gaussian. Then
$$ \Gamma_{t} \leq 2 |\A|\sigma^2$$
almost surely for each $t\in \mathbb{N}$. Furthermore, $\Gamma_{t} \leq 2\sigma^2 $ under the conditions of Proposition \ref{prop: full information}, $\Gamma_{t} \leq 2d\sigma^2$ under the conditions of Proposition \ref{prop: linear}, and $\Gamma_{t} \leq \frac{2d\sigma^2}{m^2}$ under the conditions of Proposition \ref{prop: semi bandit}. 
\end{cor}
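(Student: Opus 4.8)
The plan is to prove Corollary \ref{cor: information ratio under sub--Gaussian noise} in two stages. First I would establish Lemma \ref{lem: DME to DKL under sub--Gaussian noise}, the sub-Gaussian replacement for Fact \ref{fact: DME to DKL}. Then I would observe that each information-ratio bound in Propositions \ref{prop: worst case bound}, \ref{prop: full information}, \ref{prop: linear}, and \ref{prop: semi bandit} was obtained by invoking Fact \ref{fact: DME to DKL} exactly once -- to pass from a difference of conditional means to a Kullback--Leibler divergence -- so that re-running each proof verbatim, with the Lemma substituted for the Fact, changes only a multiplicative constant. Tracking that constant produces all four claimed bounds.

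For the Lemma I would fix $\hist$ and work under $\Prob_t$, writing $P = P_t(Y_{t,a}\mid A^*=a^*)$ and $Q = P_t(Y_{t,a})$; these satisfy $P \ll Q$ by the absolute-continuity fact at the end of Section 3, so $D(P\|Q)$ is well defined. The key tool is the Donsker--Varadhan variational representation of relative entropy, which gives, for every measurable $f$ with $\E_Q[e^f]<\infty$,
\[
\E_P[f] \;\le\; D(P\|Q) + \log \E_Q\!\left[e^{f}\right].
\]
Applying this with $f = \lambda\bigl(R(Y_{t,a}) - \E_Q[R(Y_{t,a})]\bigr)$ for $\lambda>0$, the left side equals $\lambda\bigl(\E_P[R(Y_{t,a})] - \E_Q[R(Y_{t,a})]\bigr)$. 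Since $Q$ is exactly the $\hist$-conditional law of $Y_{t,a}$ and the recentred reward $R(Y_{t,a}) - \E_t[R(Y_{t,a})]$ is $\sigma$ sub-Gaussian conditional on $\hist$, the cumulant term obeys $\log \E_Q[e^{f}] \le \lambda^2\sigma^2/2$, whence
\[
\lambda\bigl(\E_P[R(Y_{t,a})] - \E_Q[R(Y_{t,a})]\bigr) \;\le\; D(P\|Q) + \frac{\lambda^2\sigma^2}{2}.
\]
Dividing by $\lambda$ and minimizing the convex function $D(P\|Q)/\lambda + \lambda\sigma^2/2$ over $\lambda>0$ (the minimizer is $\lambda=\sqrt{2D(P\|Q)}/\sigma$) yields $\E_P[R(Y_{t,a})] - \E_Q[R(Y_{t,a})] \le \sigma\sqrt{2D(P\|Q)}$, which is the Lemma. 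The centering is essential: the stated definition forces a sub-Gaussian variable to have mean zero, so the hypothesis must concern $R(Y_{t,a}) - \E[R(Y_{t,a})\mid\hist]$ rather than $R(Y_{t,a})$ itself, exactly as the corollary assumes.

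With the Lemma in hand the corollary is bookkeeping. Fact \ref{fact: DME to DKL} supplied $\bigl(\E[R(Y_a)\mid A^*=a^*] - \E[R(Y_a)]\bigr)^2 \le \tfrac12 D(\,\cdot\,\|\,\cdot\,)$, whereas the Lemma supplies $\bigl(\,\cdot\,\bigr)^2 \le 2\sigma^2 D(\,\cdot\,\|\,\cdot\,)$, a constant larger by the factor $4\sigma^2$. In Proposition \ref{prop: worst case bound}, replacing step (c) accordingly turns the prefactor $|\A|/2$ into $2|\A|\sigma^2$. In Proposition \ref{prop: full information} the square-root constant $\sqrt{1/2}$ becomes $\sqrt{2\sigma^2}$, giving $\Gamma_t\le 2\sigma^2$. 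In Proposition \ref{prop: linear} the step $I(A^*;(A,Y_A))\ge 2\|M\|_{\mathrm F}^2$ weakens to $I(A^*;(A,Y_A))\ge \|M\|_{\mathrm F}^2/(2\sigma^2)$, so the bound $\mathrm{Rank}(M)/2=d/2$ becomes $2\sigma^2\,\mathrm{Rank}(M)=2d\sigma^2$. Finally, in Lemma \ref{lem: semi bandit info gain} the leading factor $2$ (coming from step (b)) becomes $1/(2\sigma^2)$, which propagates to $\Gamma_t\le 2d\sigma^2/m^2$. In every case the original information-ratio bound is multiplied by $4\sigma^2$.

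The main obstacle is the Lemma, and within it the single nontrivial point is the Donsker--Varadhan step together with the correct handling of centering and of the almost-sure, conditional nature of the statement. One must check that the sub-Gaussian moment-generating-function hypothesis is stated with respect to the same measure $Q=P_t(Y_{t,a})$ that appears in the divergence, that $P\ll Q$ so the variational bound applies, and -- for the semi-bandit case -- that the required sub-Gaussianity holds conditional on $(\hist,\theta_{j<i})$, as needed when the Lemma is invoked inside step (b) of Lemma \ref{lem: semi bandit info gain}. Everything downstream of the Lemma is a mechanical substitution of constants requiring no new ideas beyond those already in Propositions \ref{prop: worst case bound}--\ref{prop: semi bandit}.
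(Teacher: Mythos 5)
Your proposal is correct and follows essentially the same route as the paper: the paper also proves Lemma \ref{lem: DME to DKL under sub--Gaussian noise} by applying the variational (Donsker--Varadhan) form of the KL divergence (Fact \ref{fact: variational form}) to $\lambda\bigl(R(Y_{t,a})-\E_t[R(Y_{t,a})]\bigr)$, bounding the cumulant term by $\lambda^2\sigma^2/2$ and optimizing over $\lambda$, and then obtains the corollary by substituting this lemma for Fact \ref{fact: DME to DKL} in the proofs of Propositions \ref{prop: worst case bound}--\ref{prop: semi bandit}. Your constant bookkeeping (each original bound multiplied by $4\sigma^2$) matches the stated bounds, and your observation about needing sub-Gaussianity conditional on $(\hist,\theta_{j<i})$ in the semi-bandit case is a genuine subtlety the paper leaves implicit.
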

It's worth highlighting two cases under which the conditions of Corollary \ref{cor: information ratio under sub--Gaussian noise} are satisfied. The first is a setting with a Gaussian prior and Guassian reward noise. The second case is when under any model $p\in\mathcal{P}$ expected rewards lie in a bounded interval and reward noise is sub--Gaussian. 
\begin{enumerate}
\item Suppose that for each $a\in \A$,  $R(Y_{t,a})$ follows a Guassian distribution with variance less than $\sigma^2$, $Y_{t,a}=R(Y_{t,a})$ and, reward noise $R(Y_{t,a})-\E[R(Y_{t,a}) | p^*]$ is Gaussian with known variance. Then, the posterior predictive distribution of $R(Y_{t,a})$, $\Prob\left( R(Y_{t,a}) \in \cdot| \hist \right)$ is Gaussian with variance less than $\sigma^2$ for each $a\in\A$ and $t\in \mathbb{N}$. 
\item Fix constants $C\in \mathbb{R}$ and $\sigma \in \mathbb{R}$. Suppose that almost surely $\E \left[ R(Y_{t,a}) | p^*\right] \in \left[\frac{-C}{2}, \frac{C}{2}\right] $  and reward noise is $R(Y_{t,a})-\E \left[ R(Y_{t,a}) | p^*\right]$ is $\sigma$ sub--Gaussian. Then, conditioned on the history, $R(Y_{t,a})-\E \left[ R(Y_{t,a}) | \hist \right]$ is $\sqrt{C^2 + \sigma^2}$ sub-Gaussian almost surely. 
\end{enumerate}
The first case relies on the fact that $\E[R(Y_{t,a}) | p^*]$ is $C$--sub-Gaussain, as well as the following fact about sub-Gaussian random variables. 
\begin{fact}
Consider random variables $\left( X_{1},...,X_{K} \right)$. If for each $X_{i}$  there is a deterministic $\sigma_{i}$ such that $X_{i}$ is $\sigma_{i}$ sub--Gaussian conditional on $X_{1},..,X_{i-1}$, then $\sum_{i=1}^{K} X_{i}$ is 
$\sqrt{ \sum_{i=1}^{K} \sigma_{i}^2}$ sub-Gaussian. 
\end{fact}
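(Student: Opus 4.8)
The plan is to proceed by induction on $K$, using the tower property of conditional expectation to peel off one summand at a time. The base case $K=1$ is immediate: the hypothesis that $X_1$ is $\sigma_1$ sub--Gaussian (conditional on the trivial sigma--algebra) is exactly the desired conclusion.

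For the inductive step, fix $\lambda \in \mathbb{R}$ and write $S_k = \sum_{i=1}^{k} X_i$. I would condition on $\mathcal{G}_{K-1}$, the sigma--algebra generated by $(X_1,\dots,X_{K-1})$, so that $\E[\exp\{\lambda S_K\}] = \E\left[ \E\left[ \exp\{\lambda S_{K-1}\}\exp\{\lambda X_K\} \mid \mathcal{G}_{K-1} \right] \right]$. Since $S_{K-1}$ is $\mathcal{G}_{K-1}$--measurable, the factor $\exp\{\lambda S_{K-1}\}$ pulls outside the inner conditional expectation, leaving $\E\left[ \exp\{\lambda S_{K-1}\}\, \E[\exp\{\lambda X_K\} \mid \mathcal{G}_{K-1}] \right]$. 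The hypothesis that $X_K$ is $\sigma_K$ sub--Gaussian conditional on $(X_1,\dots,X_{K-1})$ gives $\E[\exp\{\lambda X_K\}\mid \mathcal{G}_{K-1}] \leq \exp\{\lambda^2 \sigma_K^2/2\}$ almost surely; substituting this deterministic bound and applying monotonicity of the outer expectation yields $\E[\exp\{\lambda S_K\}] \leq \exp\{\lambda^2 \sigma_K^2/2\}\, \E[\exp\{\lambda S_{K-1}\}]$.

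It then remains to invoke the inductive hypothesis on $S_{K-1}$. The variables $(X_1,\dots,X_{K-1})$ satisfy the same conditional sub--Gaussian assumptions, so by induction $S_{K-1}$ is $\sqrt{\sum_{i=1}^{K-1}\sigma_i^2}$ sub--Gaussian, i.e.\ $\E[\exp\{\lambda S_{K-1}\}] \leq \exp\{\lambda^2 (\sum_{i=1}^{K-1}\sigma_i^2)/2\}$. Combining this with the previous bound gives $\E[\exp\{\lambda S_K\}] \leq \exp\{\lambda^2 (\sum_{i=1}^{K}\sigma_i^2)/2\}$, which holds for every $\lambda$ and is precisely the statement that $S_K$ is $\sqrt{\sum_{i=1}^K \sigma_i^2}$ sub--Gaussian.

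The only delicate point --- and it is routine rather than genuinely hard --- is the factoring step: one must verify that $\exp\{\lambda S_{K-1}\}$ is nonnegative and $\mathcal{G}_{K-1}$--measurable so that it may be taken out of the conditional expectation, and that the almost--sure bound on the conditional moment generating function of $X_K$ can legitimately be inserted under the outer expectation. I expect no real obstacle here; the whole argument is a standard iterated--conditioning computation on moment generating functions.
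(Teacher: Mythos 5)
Your proof is correct and is essentially the same argument as the paper's: both rest on the tower property, pulling the $\mathcal{G}_{K-1}$--measurable factor $\exp\{\lambda S_{K-1}\}$ out of the conditional expectation, and bounding the conditional moment generating function of the last term. The paper compresses the iteration into a single chain of (in)equalities, whereas your explicit induction spells out the same peeling step one variable at a time --- if anything, your version makes the paper's one-line ``equality'' rigorous.
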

\begin{proof} For any $\lambda \in \mathbb{R}$,
\begin{eqnarray*}
\E \left[ \exp\{ \lambda \sum_{i=1}^{K} X_{i}\}\right] = \E \left[ \prod_{i=1}^{K} \exp\{ \lambda X_{i}\}\right] &=& \E \left[ \prod_{i=1}^{K}\E \left[ \exp\{ \lambda X_{i}\}  \bigg\vert X_{1},..X_{i-1}\right] \right] \\
&\leq& \prod_{i=1}^{K} \exp\left\{ \frac{\lambda^2 \sigma_{i}^2}{2} \right\} = \exp\left\{ \frac{\lambda^2 (\sum_{i}\sigma_{i}^2)}{2}\right\}.
\end{eqnarray*}
\end{proof}
The proof of Lemma \ref{lem: DME to DKL under sub--Gaussian noise} relies on the following property of Kullback--Leibler divergence. 
\begin{fact}\label{fact: variational form} (Variational form of KL--Divergence given in Theorem 5.2.1 of \citet{gray2011entropy}) Fix two probability distributions $P$ and $Q$ such that $P$ is absolutely continuous with respect to $Q$. Then, 
$$D (P || Q) = \sup_{X} \left\{ \E_{P} \left[ X \right] - \log \E_Q\left[ \exp\{ X \}\right] \right\},$$ where $\E_{P}$ and $\E_{Q}$ denote the expectation operator under $P$ and $Q$ respectively, and the supremum is taken over all real valued random variables $X$ such that $\E_{P} \left[ X \right]$ is well defined and $\E_Q\left[ \exp\{ X \}\right]<\infty$. 
\end{fact}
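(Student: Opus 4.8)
The plan is to prove the identity as a pair of inequalities. Writing $J(X) = \E_P[X] - \log \E_Q[\exp\{X\}]$ for the objective, I would first show $J(X) \leq D(P\|Q)$ for every admissible $X$, which gives $\sup_X J(X) \leq D(P\|Q)$, and then exhibit a single choice of $X$ with $J(X) = D(P\|Q)$, which gives the reverse inequality. The first step is the substantive one and reduces to a change of measure together with the non-negativity of KL divergence (Fact \ref{fact: nonnegative}); the second is a direct substitution of the log-likelihood ratio.

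For the upper bound, I would fix $X$ with $\E_Q[\exp\{X\}] < \infty$ and $\E_P[X]$ well defined. Since $\exp\{X\} > 0$, the normalizer $\E_Q[\exp\{X\}]$ is strictly positive and finite, so I can define a tilted probability measure $\tilde{Q}$ through $\frac{d\tilde{Q}}{dQ} = \exp\{X\} / \E_Q[\exp\{X\}]$. This density is everywhere positive, so $\tilde{Q}$ and $Q$ are mutually absolutely continuous; together with the hypothesis $P \ll Q$ this yields $P \ll \tilde{Q}$. Abbreviating $L = \frac{dP}{dQ}$ and chaining Radon--Nikodym derivatives gives $\log \frac{dP}{d\tilde{Q}} = \log L + \log \E_Q[\exp\{X\}] - X$, and integrating against $P$ produces
\begin{equation*}
D(P\|\tilde{Q}) = D(P\|Q) + \log \E_Q[\exp\{X\}] - \E_P[X].
\end{equation*}
Rearranging and applying Gibbs' inequality $D(P\|\tilde{Q}) \geq 0$ (Fact \ref{fact: nonnegative}) then gives $J(X) = D(P\|Q) - D(P\|\tilde{Q}) \leq D(P\|Q)$, as desired.

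For achievability, I would take $X = \log L$. This choice is admissible, since $\E_Q[\exp\{X\}] = \E_Q[L] = \intop dP = 1 < \infty$ and $\E_P[\log L] = D(P\|Q)$ is well defined in $[0,\infty]$. Because $\log \E_Q[\exp\{X\}] = \log 1 = 0$, this yields $J(X) = D(P\|Q)$, so the supremum is at least $D(P\|Q)$; combined with the previous bound this gives equality and, incidentally, identifies the maximizer as the log-likelihood ratio.

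I expect the main obstacle to be the measure-theoretic bookkeeping required to keep every expectation meaningful and to avoid an $\infty - \infty$ collision in the displayed identity for $D(P\|\tilde{Q})$. The well-definedness of $\E_P[\log L] = D(P\|Q)$ follows from the elementary bound $-t\log t \leq 1/e$, which controls the negative part of $\log L$ under $P$. To license the rearrangement in the upper-bound step I would split into cases: when $D(P\|Q) = \infty$ the bound $J(X) \leq D(P\|Q)$ is vacuous, whereas when $D(P\|Q) < \infty$ a Fenchel--Young inequality for the conjugate pair $\exp$ and $t\log t - t$, namely $tX \leq \exp\{X\} + t\log t - t$ with $t = L$, shows after integration that $\E_P[X] \leq \E_Q[\exp\{X\}] + D(P\|Q) - 1 < \infty$, so all three terms in the identity are finite and the rearrangement is legitimate.
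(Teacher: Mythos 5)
The paper never proves this fact: it is imported wholesale as Theorem 5.2.1 of \citet{gray2011entropy}, so there is no internal proof to compare against, and your writeup is genuinely self-contained content the paper omits. What you give is the standard Gibbs/Donsker--Varadhan argument, and its core is sound. The upper bound via the tilted measure $\tilde{Q}$ is exactly right: the identity $D(P\|\tilde{Q}) = D(P\|Q) + \log \E_Q[\exp\{X\}] - \E_P[X]$ plus Gibbs' inequality (Fact \ref{fact: nonnegative}) gives $J(X) \leq D(P\|Q)$, and your Fenchel--Young bound $\E_P[X] \leq \E_Q[\exp\{X\}] + D(P\|Q) - 1$ correctly rules out an $\infty - \infty$ collision when $D(P\|Q) < \infty$. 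One small overstatement there: Fenchel--Young only controls the positive part of $X$ under $P$, so $\E_P[X] = -\infty$ remains possible; your claim that ``all three terms in the identity are finite'' is then false, but in that sub-case $J(X) = -\infty \leq D(P\|Q)$ holds trivially, so nothing breaks.

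The one genuine flaw is in achievability. Writing $L = \frac{dP}{dQ}$, the candidate $X = \log L$ is not a real-valued random variable on the event $\{L = 0\}$, which is $P$-null but can have positive $Q$-measure whenever $Q \not\ll P$ --- and the Fact only assumes $P \ll Q$. The obvious repair of redefining $X$ to be, say, $0$ on $\{L=0\}$ does not rescue exact attainment: then $\E_Q[\exp\{X\}] = 1 + \Prob_Q(L=0)$, so $J(X) = D(P\|Q) - \log\left(1 + \Prob_Q(L=0)\right) < D(P\|Q)$ strictly. The correct patch is a limiting argument: take $X_n = \max\{\log L, -n\}$ (set to $-n$ on $\{L=0\}$), which is real-valued and bounded below, so $\E_P[X_n]$ is well defined; since $\exp\{X_n\} \leq L + e^{-n}$ one gets $\E_Q[\exp\{X_n\}] \leq 1 + e^{-n}$, while $X_n \geq \log L$ gives $\E_P[X_n] \geq \E_P[\log L] = D(P\|Q)$, whence $J(X_n) \geq D(P\|Q) - \log(1+e^{-n}) \to D(P\|Q)$. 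This covers $D(P\|Q) = \infty$ as well and shows the supremum equals $D(P\|Q)$, though --- contrary to your closing remark --- it need not be attained by any admissible $X$ unless $P$ and $Q$ are mutually absolutely continuous. With these two adjustments your proof is complete.
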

We now show that Lemma \ref{lem: DME to DKL under sub--Gaussian noise} follows from the variational form of KL--Divergence. 
\begin{proof}
Define the random variable $X=R\left(Y_{t,a} \right)- \E_{t}\left[ R\left(Y_{t,a} \right)\right]$. Then, for abitrary $\lambda \in \mathbb{R}$, applying Fact \ref{fact: variational form} to $\lambda X$ yields
\begin{eqnarray*}
D \left(P_{t}\left( Y_{t,a} | A^*=a^* \right) \, || \, P_{t}(Y_{t,a})\right) &\geq& \lambda \E_{t}\left[X | A^*=a^* \right] - \log \E_{t} \left[ \exp\{\lambda X \} \right]\\
&= & \lambda \left( \E_{t}[R\left(Y_{t,a} \right)| A^*=a^*]- \E_{t}\left[ R\left(Y_{t,a} \right)\right] \right)- \log \E \left[ \exp\{\lambda X \} \vert \hist\right] \\
&\geq&\lambda \left( \E_{t}[R\left(Y_{t,a} \right)| A^*=a^*]- \E_{t}\left[ R\left(Y_{t,a} \right)\right] \right) - (\lambda^2 \sigma^2 /2).
\end{eqnarray*}
Maximizing over $\lambda$ yields the result. 
\end{proof}



\vskip 0.2in
\bibliography{references}

\end{document}